\newcommand{\subseqs}{\mathcal{S}}
\newcommand{\featurespace}{\mathbb{F}}
\newcommand{\subs}{\sqsubseteq}
\newcommand{\ranking}{\mathbf{r}}
\newcommand{\rankings}{\mathcal{R}}
\newcommand{\google}{\mathbb{G}}
\newcommand{\bing}{\mathbb{B}}
\newcommand{\adjacency}{\mathbf{A}}
\newcommand{\bA}{\mathbf{A}}
\newcommand{\bL}{\mathbf{L}}
\newcommand{\bM}{\mathbf{M}}
\newcommand{\bI}{\mathbf{I}}
\newcommand{\ties}{\mathcal{T}}
\newtheorem{theorem}{Theorem}
\newtheorem{corollary}{Corollary}
\newtheorem{example}{Example}
\DeclareMathOperator{\tr}{tr}
\DeclareMathOperator{\nnz}{nnz}
\begin{document}
\title{ Consensus measure of rankings }

\author{%
	{Zhiwei Lin{\small $~^{\#1}$},  Yi Li{\small $~^{*2}$},Xiaolian Guo{\small $~^{\#3}$} }%
	\vspace{1.6mm}\\
	\fontsize{10}{10}\selectfont\itshape
	$^{\#}$\,School of Computing  and Mathematics, Ulster University\\
	Northern Ireland, United Kingdom\\
	\fontsize{9}{9}\selectfont\ttfamily\upshape
	$^{1}$\,z.lin@ulster.ac.uk\\
	$^{3}$\,guo-x3@email.ulster.ac.uk%
	% add some space between email and affil
	\vspace{1.2mm}\\
	\fontsize{10}{10}\selectfont\rmfamily\itshape
	% 20080211 CAUSAL PRODUCTIONS
	% separated superscript on following line from affiliation using narrow space \,
	$^{*}$\,Division of Mathematics, SPMS\\ Nanyang Technological University\\
	 21 Nanyang Link, Singapore\\
	\fontsize{9}{9}\selectfont\ttfamily\upshape
	% 20080211 CAUSAL PRODUCTIONS
	% removed ~ from pre-superscript since it does not seem to serve any purpose
	$^{2}$\,yili@ntu.edu.sg
}

\maketitle

%
%\author{Zhiwei Lin }
%\address{School of Computing  and Mathematics, Ulster Universit, Northern Ireland, United Kingdom\\
%Email:z.lin@ulster.ac.uk}
%
%
%\author{Yi Li }
%\address{Division of Mathematics, SPMS\\ Nanyang Technological University, 21 Nanyang Link, Singapore \\
%	Email:yili@ntu.edu.sg}
%
%
% 
%\author{Xiaolian Guo}
%\address{School of Computing  and Mathematics, Ulster Universit, Northern Ireland, United Kingdom\\
%	Email:guo-x3@email.ulster.ac.uk}
%	
%	 
\begin{abstract}

A ranking is an ordered sequence of   items, in which an item with higher ranking score is more preferred than  the items with lower ranking scores.   In many information systems,  rankings are widely used to represent the preferences over a set of items or candidates. The consensus measure  of rankings  is the problem of how to evaluate  the degree to which the rankings agree.  The consensus measure can be used to evaluate rankings in many information systems,  as quite often  there is not ground truth available for evaluation.

This paper introduces a novel approach for consensus measure of rankings by using graph representation, in which the vertices or nodes are the items and the edges are the relationship of items in the rankings.  Such representation leads to various algorithms for consensus measure in terms of different aspects of rankings, including the number of common patterns, the number of common patterns with fixed length and the length of the longest common patterns.  The proposed measure can be adopted for various types of rankings, such as full rankings, partial rankings and rankings with ties.  This paper  demonstrates how the proposed approaches can be used to evaluate the quality of rank aggregation and the quality of top-$k$ rankings from Google and Bing search engines.

\end{abstract}
 
%\keywords{Ranking, ranking sequence, consensus, ordering, subsequence, rank aggregation}

\section{Introduction}
In many information systems,  rankings are widely used to represent the preferences over a set of items or candidates, ranging from information retrieval, recommender to  decision making systems \cite{Liang:2014:TRA,Ivan:group:decision:2012,Schubert:ranking:outliers:2012,Farnoud:2014,Georgoulas:2017,Choudhury:icde:2017}, in order to improve quality of the services provided by the systems. For example, in search engine, the list of the terms suggested by a search engine after a user's few keystrokes is a typical ranking and such ranking service, widely adopted nowadays, has great impact on user's search experience; it is also recognized that the list of search results is a ranking after a query is issued.

A ranking is an ordered sequence of   items, in which an item with higher ranking score is more preferred than  the items with lower ranking scores.   The consensus   of rankings  is the degree to which the rankings agree according to  certain common patterns. The consensus measure,  can be used in many information systems, in order to uncover how close or related the rankings are.  For example, in the group decision making, a group of experts express their preferences over a set of candidates by using rankings and the measure of the degree of consensus is very useful for reaching  consensus \cite{Ivan:group:decision:2012}. 

{
In many information system with large volume of items, such as search engines, it is hard to clearly define what ground truth is, which make it more difficult to evaluate  and compare the rankings returned from the systems.    
The consensus measure  of rankings, as a tool for understanding how related or close the rankings are, will  help engineers and researchers  to discern  what aspects of a ranking system need to be improved and to detect outliers \cite{Cormode:ICDE:2009,Schubert:2012}.  For a set of rankings $\rankings=\{\ranking_1, \dots, \ranking_n  \}$,  one approach to understanding the degree to which rankings agree  is to use rank correlation or similarity function by pairwise comparison \cite{Schubert:ranking:outliers:2012,Yilmaz:2008:rank:correlation,Kumar:2010:GDR,Carterette:2009:RCD, Kendall:1938:rank:correlation,Kendall:correlation:book,Spearman:1904:measurement,Webber:2010,Vigna:2015:rankings:ties}.  The notable functions include   the  Kendall index  $\tau(\ranking_i,\ranking_j)$ and the Spearman index $\rho(\ranking_i,\ranking_j)$ \cite{Kendall:1938:rank:correlation,Spearman:1904:measurement}, which however  do not have a weighting scheme so that less important items can be penalized. It is common that in information retrieval, the documents (items) at the top of a ranking list are more important than those at the bottom \cite{Fagin:2003}. As such, it makes sense to  reduce the impact from the bottom items with a    weighting scheme. For example, the variation of $\tau$ index, denoted by $\tau_{ap}$,  with  average precision,  is able to   give greater weight to the top items of the ranking lists \cite{Yilmaz:2008:rank:correlation}.   
These methods assume  rankings are conjoint, meaning that items in the rankings are completely overlapped. Undoubtedly, they cannot be used for     partial rankings, in which items may not be mutually overlapped. As a similarity function for two partial  rankings, the RBO (rank-biased overlap) proposes to weight the number of common items according to the depth of rankings \cite{Webber:2010}, and it doesn't take into account the order of items in the rankings. 

When one of the correlation or similarity functions is   used for consensus measure for a set of $n$ rankings in $\rankings$, we can aggregate the pairwise comparison values across all rankings for $\binom{n}{2}$ times. Since the pairwise comparison is based on the degree of commonality in   two rankings, with respect to features or patterns (e.g, the common items, or the concordant pairs against the disconcordant pairs),   the aggregated result is not informative enough to tell the extend to which   the rankings agree  in $\rankings$, according to the study by  Elzinga et al.~\cite{Cees:2011:concordance}. Also, the type of rankings can be full or partial, specially the top-$k$ \cite{Fagin:2003}, the existing measures fail to meet the requirements for handling different types of rankings.  
}

In order to effectively evaluate and compare rankings,   which could be full or partial and especially in which some items need to be weighted, this paper propose a new approach based on graph representation. 
The novelty of this paper lies in that fact the new proposed  consensus measure of rankings does not  need pairwise comparison, which is significantly different from the pairwise approaches using similarity or correlation  functions. The contribution of the paper includes:
\begin{itemize}[leftmargin=1cm] 	
	\item we introduce a directed acyclic graph (DAG) to represent the relationship between items in the rankings so that such representation can be used to induce   efficient algorithms for consensus measure of rankings;
	\item  the proposed representation of DAG   enables us to  approach consensus measure of rankings in terms of different aspects of the   common features or patterns hidden in the rankings, including  $\kappa(\rankings)$ -- the number of common patterns, $\kappa_p(\rankings)$ -- the number of common patterns with a fixed length $p$, and  $\ell(\rankings)$ -- the length of the longest or largest common patterns. 
	\item the proposed representation of DAG is extended to allow the edges in the graph to have weights so that more ``important'' features or patterns are assigned with higher values and the features or patterns with less ``importance'' are penalized. 
	\item we also demonstrate that the consensus measure of rankings with graph representation can  be extended to calculate  consensus measure for  duplicate rankings, for rankings with ties and for  rankings whose top items need to be weighted. 
	\item we show that our approach can be used for different types of rankings, including the full rankings and top-$k$ rankings. 
\end{itemize}  
 The rest of  paper is organized as follows. Section \ref{section:preliminaries} introduces the important notations and concepts used in the paper, followed by a review of related work   in Section \ref{section:related:work}.  Section \ref{section:consensus} presents a directed  graph representation approach for consensus measure. Section \ref{section:experiments} shows how the proposed approaches can be used to evaluate rank aggregation and to compare top-$k$ rankings. The paper is concluded  in Section \ref{section:conclusion}.
  
%%%%%%

\section{Preliminaries} \label{section:preliminaries}

This section introduces notations and concepts of graph, ranking sequence, and consensus measure  that will be used in the rest of the paper. 

\subsection{Directed graph}
A {\em directed graph} is a pair $G=(V,E)$, where $V$ is the set of  nodes (or vertices) and $E$ is the set of directed edges. A directed edge $(x,y)$ means that the edge leaves node $x\in V$ and enters node $y\in V$.  An edge $(x,x)$ is called a loop, which leaves node $x$ and returns to itself. Given a graph $G=(V,E)$ with $n=|V|$ nodes,  matrix $\mathbf{A}=\left(A_{ij}\right)_{n\times n }$  is used to denote the {\em  adjacency matrix} of graph $G=(V,E)$,  where $A_{ij}=1$ if there exists edge $(x_i,x_j)\in E$; and $A_{ij}=0$, otherwise. 

The adjacency matrix $\adjacency$ assumes that all the edges have identical weights of 1, and this can be relaxed in the {\em weighted directed graph}. A weighted directed graph  $G=(V,E, W)$ is a directed graph, in which $W$ is a set of weights on the edges and each edge $(x_i,x_j)\in E$ is assigned a non-zero  weight $w(i,j) \in W$.  Then, the adjacency matrix $\adjacency$ for $G=(V,E, W)$ is defined as $A_{ij}=w(i,j)$ if $(x_i,x_j)\in E$ and $A_{ij}=0$, otherwise. 

A  {\em path} from node $x_i$ to $x_j$ is  a sequence of distinct  non-loop edges \[(x_i,x_{k_1}), (x_{k_1},x_{k_2}),\dots, (x_{k_{p}}, x_j)\] connecting   node $x_i$  and  $x_j$.  

\subsection{Ranking sequences}
A ranking $\ranking$ is an ordered sequence $\ranking=(\sigma_{i_1},\sigma_{i_2},\ldots,\sigma_{i_m})$ of $m$ distinct items drawn from a universe $\Sigma=\{\sigma_1,\cdots,\sigma_n \}$, where $m\leq n$ and $\sigma_{i_j}$ is more preferred than $\sigma_{i_k}$ if $i_j<i_k$. The length of $\ranking$ is denoted by $|\ranking|$. For notational simplicity, we shall simply write a ranking as a sequence of $\ranking= \sigma_{i_1}\sigma_{i_2}\cdots \sigma_{i_m}$ in the rest of the paper. 

For a ranking $\ranking=r_{1}\cdots r_{k}$, where $r_{j}\in \Sigma$ for $1\leq j\leq k$, we can  define the embedded patterns with respect to {\em subsequences}. A sequence $\ranking'=r'_{1}\cdots r'_{m}$ is called a {\em subsequence} of $\ranking$, denoted by $\ranking'\sqsubseteq \ranking$, if $\ranking'$ can be  obtained by  deleting $k-m$ items from $\ranking$. We denote by $\ranking'\not\sqsubseteq \ranking$ that $\ranking'$ is not a subsequence of  $ \ranking$.  For example,  $bde \sqsubseteq abcde$, and $bac\not\sqsubseteq abcde$.

A ranking sequence with no items is an {\em empty sequence}. We use $\subseqs(\ranking)$ to denote the set of all possible non-empty subsequences of $\ranking$.   $\subseqs(\ranking)$ can be partitioned into subsets $\subseqs_p(\ranking)$, where $\subseqs_p(\ranking)$ consists of all subsequences of length $p$. For example, if $\ranking=abcde$, then  $\subseqs_3(\ranking)=\{abc,  abd, abe, acd, ace, ade, bcd, bce, bde, cde\}$, in which each subsequence has length $3$.

The degree to which rankings agree lies in the common patterns or features which are embedded in the rankings. For ranking sequences, the subsequences are the patterns or features. Given a set of $N$ rankings $\rankings=\{\ranking_1,\dots,\ranking_N  \}$, consider $\subseqs(\rankings) = \subseqs(\ranking_1)\cap \cdots \cap\subseqs(\ranking_N)$,  each element $x\in\subseqs(\rankings)$ is a {\em common subsequence} of $\ranking_1,\dots,\ranking_m $,   for which we also use the notation $x\sqsubseteq \rankings$.  Similar to   $\subseqs_p(\ranking)$, we   also define    $\subseqs_p(\rankings)$ to denote the subsets of all common subsequences of length $p$. Therefore, it holds that   $\subseqs(\rankings)=\bigcup_{1\leq p\leq l} \subseqs_p(\rankings)$, where $l=\min\{ |\ranking| : \ranking\in \rankings \}$. In a special case, for two rankings $\ranking_i$ and $\ranking_j$, we will write $\subseqs_p(\ranking_i,\ranking_j)$ to denote the set of  $p$-long common subsequences between  $\ranking_i$ and $\ranking_j$.

It is clear that $\subseqs(\rankings)$ accommodates all common features (subsequences), which are subsumed by each ranking $\ranking\in \rankings$. Let $\kappa(\rankings)$ denote the number of all common subsequences of $\rankings$, i.e,    
\begin{equation}\label{eq:def:measure}
\kappa(\rankings)=|\subseqs(\rankings)|.
\end{equation}
The more common features $\subseqs(\rankings)$ has or the bigger $\kappa(\rankings)$ is, the higher degree of consensus $\rankings$ has.  We   also define 
\begin{equation}
\kappa_p(\rankings)=|\subseqs_p(\rankings)|
\end{equation}
 in order to measure the consensus in $\rankings$ with respect to the number of subsequences of a given length $p$.  The length of the longest common subsequences of rankings in $\rankings$ is denoted by  $\ell(\rankings)$ or simply $\ell$. Then, $\ell=\max \{|z| : z \in \subseqs(\rankings)  \}$.

Therefore, we have the following properties:
\begin{itemize}
	\item For a set   with only one  ranking $\rankings=\{\ranking\}$, where $n=|\ranking|$, $\kappa(\rankings)=2^n-1$;
	\item For a set of two   rankings $\rankings=\{\ranking_x, \ranking_y\}$, where $m=|\ranking_x|$ and $n=|\ranking_y|$, we have 
		$0\leq \kappa(\rankings)\leq 2^{\min\{m,n \}}-1;$
	\item For two sets of rankings $\rankings_x$ and $\rankings_y$, if $\rankings_x\subseteq\rankings_y$, then
	\begin{eqnarray}\label{eq:kappa:property3}
		\kappa(\rankings_y)\leq \kappa(\rankings_x)
	\end{eqnarray}
\end{itemize}

\subsection{Consensus measure of rankings in feature spaces}
For a set of $n$ rankings $\rankings$, we can form a set of features $\featurespace=\bigcup_{\ranking\in \rankings} \subseqs(\ranking)$. 
Let $m=|\featurespace|$ and $\featurespace = \{y_1,\dots,y_m\}$. Each ranking $\ranking$ can be represented by a feature vector with a mapping function $\phi : \rankings \rightarrow \{0,1\}^m $:
\[
\phi(\ranking)= \left(f_\ranking(y_1), \dots, f_\ranking(y_m)  \right),
\]
where 
\begin{equation}\label{eq:feature:mapping}
	f_\ranking(y_k)=
	\begin{cases}
		1 &  y_k \subs \ranking \\
		0 &  y_k \not\subs \ranking
	\end{cases}
\end{equation}
It is clear that $\kappa(\rankings)$, defined in Equation \eqref{eq:def:measure} can be rewritten using the   inner product  on $n$-inner product spaces \cite{Gunawan2002,MANA:n:inner:product} as 
\begin{equation}\label{eq:feature:space}
	\kappa(\rankings) = \langle \phi(\ranking_1), \dots, \phi(\ranking_n)  \rangle 
	= \sum_{k=1}^{m} \prod_{\ranking\in \rankings}   f_{\ranking} (y_k) 
\end{equation}
With the generalized inner product, we find that  the $\kappa(\rankings)$ is a kernel function \cite{ShaweTaylor_book} when $|\rankings|=2$. The rewritten $\kappa(\rankings)$ relies on the definition of $f_\ranking(y_k)$ as defined in  Equation \eqref{eq:feature:mapping}, whose co-domain is $\{0, 1\}$. It is computationally expensive to enumerate all the features and to form $\featurespace$. In this section, we will transform  relationship between items to a graph so that efficient algorithms can be found without  enumerating features explicitly, which is similar to the kernel trick for kernel functions \cite{ShaweTaylor_book}.

\section{Related work}\label{section:related:work}

A ranking can be full or partial ranking, depending on the number of items from $\Sigma$ being ranked.
A ranking $\ranking$ is a {\em full} ranking if  $|\Sigma|=|\ranking|$. A  ranking $\ranking$ is called {\em partial }  ranking if the items in $\ranking$ forms a subset of $\Sigma$. A top-$k$ ranking is a sub-ranking of full ranking but only with the top-$k$ items. 
{\em Rankings with ties} occur when some items share an identical ranking score, which happens very often in the decision making or voting process \cite{Vigna:2015:rankings:ties}. For example, in a ranking $\ranking=\{a\}\{bc\}\{d\}$, both items $b$ and $c$ are assigned with an identical ranking score.

Evaluation or comparison of rankings is an important tasks in many ranking related systems, including decision making, information retrieval, voting and recommender \cite{Liang:2014:TRA,Ivan:group:decision:2012}. One approach to evaluating rankings is to use rank correlation between two rankings.  The widely used Kendall $\tau$ index \cite{Kendall:1938:rank:correlation,Kendall:correlation:book} is a measure of rank correlation between two rankings  $\ranking_i$ and $\ranking_j$ over $n$ items  by taking into account 2-long common subsequences between them, which  can be formulated   as 
\[
\tau(\ranking_i,\ranking_j)=\frac{|\subseqs_2(\ranking_i,\ranking_j)|-|\subseqs_2(\overleftarrow{\ranking_i},\ranking_j)|}{\binom{n}{2}}
\]
where $\overleftarrow{\ranking_i}$ is a reverse ranking of ${\ranking_i}$. In rank aggregation,  one could also use the Kendall distance $d_\tau(\ranking_i,\ranking_j)$ -- a variation of the Kendall $\tau$:
\[
d_\tau(\ranking_i,\ranking_j)=\frac{|\subseqs_2(\overleftarrow{\ranking_i},\ranking_j)|}{\binom{n}{2}}
\]

The Spearman $\rho$ index is another measure of rank correlation that does not utilize the 2-long common subsequences but  instead takes into account  of each item positions in $\ranking_i$ and $\ranking_j$ \cite{Spearman:1904:measurement}.  It is defined as follows
\[
\rho(\ranking_i,\ranking_j)=1-\frac{6\sum_{\sigma\in \Sigma} \left( \eta_i(\sigma)-\eta_j(\sigma) \right)^2 }{n(n^2-1)}
\]
where $n=|\Sigma|$. The Spearman footrule distance $d_\rho$ is an $L_1$ distance, which is a variation of the Spearman $\rho$:
\[
d_\rho(\ranking_i,\ranking_j)=\frac{\sum_{\sigma\in \Sigma} | \eta_i(\sigma)-\eta_j(\sigma) | }{n(n^2-1)}
\]

Compared with the Spearman $\rho$,  the Kendall $\tau$ ignores the use of items positions, which are in many cases very important factors, e.g,  for the top-$k$ rankings. Again, the Spearman $\rho$ can not be used for sensitivity detection and analysis, as studied in \cite{Kendall:correlation:book}.

Both Kendall and Spearman can only be used for full rankings. They cannot be used for partial or top-$k$ rankings.  Even in full rankings, both of them lack of weighting schemes and are not flexible enough for rankings whose items at the top are more important than the items at the bottom \cite{Fagin:2003}.  Therefore, it is necessary to    reduce the impact from the bottom items with a  down-weighting scheme for those bottom items. For example, the variation of $\tau$ index, denoted by $\tau_{ap}$,  with  average precision,  is able to   give greater weight to the top items of the ranking lists \cite{Yilmaz:2008:rank:correlation}.  Shieh also developed a weighted metric $\tau_w$ based on the Kendall $\tau$  by adding weighting factors to the 2-long subsequences \cite{SHIEH199817}.  For full rankings with ties, $\tau_{t}$  was proposed based on the   Kendall index \cite{Vigna:2015:rankings:ties}. One extension  $\rho_w$ to the Spearman index by Iman et al. was to assign higher weights to the items at the top \cite{Iman:1987}. 

The above methods assume  that  rankings are full rankings, meaning that items in the rankings are completely overlapped.   Therefore, they cannot be used for partial rankings. In information retrieval, it is more interesting to compare the the rankings based on their  top-$k$  items. Fagin et al. proposed two measures $\tau_k$ and $\rho_k$ by adapting both Kendall $\tau$ and Spearman $\rho$ for top-$k$ rankings  \cite{Fagin:2003}.  As a similarity function for two partial  rankings, the RBO (rank-biased overlap) proposes to weight the number of common items according to the depth of rankings \cite{Webber:2010}, but it does not take into account the order of items in the rankings.

\begin{table}[!t]
	\caption{A summary of the popular indices for various types of rankings with comparison to  our approach of  $\kappa_p(\rankings)$ and  $\kappa(\rankings)$.} \label{table:summary}
	\begin{center}
		\begin{tabular}{c|c|c|c|c}
			\toprule
	  			     &Full & Partial & Weighted & Ties \\\hline
			 $\tau$  & x  && \\\hline
			 $ \rho$ & x&&\\\hline 
		  $\tau_{ap}$& x& &x \\\hline
			 $\tau_w$&  x& &x  \\\hline
			 $\rho_w$&  x& &x 	\\\hline
		     $\rho_k$& & x& 	\\	\hline
		     RBO	 &	 & x& x\\ \hline
		     $\tau_{t}$&   x& & &x \\\hline
		    $\kappa_p(\rankings)$&   x& x&x &x \\\hline
		    $\kappa(\rankings)$&   x& x&x &x \\
		   	\hline
		\end{tabular}
		
	\end{center}	
\end{table}

These functions are pairwise comparison and they can be transferred into consensus measure     for a set $\rankings$ of $n$ rankings    by  aggregating the pairwise distance values across all rankings. For example, one can use $\sum_{i=1}^{n} \sum_{j=1, i\neq j}^{n} \tau(\ranking_i,\ranking_j)$ if the Kendall index is preferred.  However,  this aggregated result is not informative enough to tell the extend to which the rankings agree  in $\rankings$, according to the study by  Elzinga et al.~\cite{Cees:2011:concordance}.

We summarize the popularly used   indices in Table \ref{table:summary} and we show that our approach of  $\kappa_p(\rankings)$ and  $\kappa(\rankings)$ is more flexible for various types of rankings, which will be demonstrated in the next section. Also, those existing indices shown in Table \ref{table:summary}  cannot be used for sensitivity detection in the consensus measure, while our approach has the ability to discern how the rankings come to agree by varying the parameters to the gaps and positions of items, as pointed out in Section \ref{section:sensitivity} and as verified in Section \ref{section:experiments}.

\section{Graph representation for consensus measure of rankings}\label{section:consensus}

This section will introduce a graph approach to consensus measure of rankings by calculating $\kappa(\rankings)$ and $\kappa_p(\rankings)$.

\subsection{A motivating example}\label{section:motivating:example}

Consider a set of rankings $\rankings=\{ \ranking_1=abcdef, \ranking_2=bdcefa,  \ranking_3=bcdeghijkf, \ranking_4=badefc\}$. Without loss of generality, we randomly pick $\ranking_1\in\rankings$ (note that $|\ranking_1|=6$) and form a lower triangle matrix $\mathbf{A}=\left(A_{ij}\right)_{6\times 6}$ of size $6\times 6$, where for $i\geq  j$,  $A_{ij}=1$ if the $i^\text{th}$ item and the $j^\text{th}$ item of $\ranking_1$ both occur in the same order in all rankings in $\rankings$, and $A_{ij}=0$ otherwise. Then we obtain  matrix $\mathbf{A}$:
\begin{equation}\label{eq:example:I}
\mathbf{A}_{6\times 6}
=\quad \bordermatrix{~ & a &b &c &d &e &f\cr
	a & 0 & 0 & 0 & 0 & 0 & 0\cr
	b & 0 & 1 & 0 & 0 & 0 & 0\cr
	c & 0 & 1 & 1 & 0 & 0 & 0\cr
	d & 0 & 1 & 0 & 1 & 0 & 0\cr
	e & 0 & 1 & 0 & 1 & 1 & 0\cr
	f & 0 & 1 & 0 & 1 & 1 & 1\cr}
\end{equation}

With matrix $\adjacency$, we can induce  a weighted directed   graph   $G=(V,E_\ell\cup E_e)$ on the diagonal elements of $\mathbf{A}$, where $V = \{A_{11},\dots, A_{66}\}$ is the set of vertices, $E_\ell$ is the set of loops and $E_e$ is the set of non-loop edges. Later, we may use $V = \{a,b, c, d, e, f \}$ interchangeably without confusion as each $A_{ii}$ stands for an item. Hereinafter in this paper, we shall distinguish loops and non-loops edges, and abuse the notation and simply call the latter edges. 

The edges are drawn according to the following: for $1\leq i, j\leq |\ranking|$,  an edge from $A_{ii}$ to $A_{jj}$ is added if the following conditions all hold: (1) $i<j$;  (2) $A_{ii}=A_{jj}=1$; (3) $A_{ji}\neq 0$. We also add dashed loops on diagonal elements of value $1$. Figure \ref{figure:example} shows the    weighted directed   graph for the matrix $\mathbf{A}$, in which there are seven directed (solid) edges, i.e,
\begin{align*}
  E_e &= \big\{(A_{22},A_{33}), (A_{22},A_{44}), (A_{22},A_{55}), (A_{22},A_{66}), \\
    & \qquad (A_{44},A_{55}),  (A_{44},A_{66}),(A_{55},A_{66}) \big\} 
\end{align*}or 
simply 
$E_e=\big\{(b,c),  (b,d), (b,e), (b,f), (d,e), (d,f), (e,f) \big\}. $
Those edges are the 2-long common subsequences: $bc, bd, be, bf,  de, df, ef$, and all of them occur  in $\ranking_1$, $\ranking_2$, $\ranking_3$ and $\ranking_4$. As such, $\kappa_2(\rankings)=|E_e|=7$.
	Similarly, paths\footnote{Recall that our definition of path excludes loop edges.} of length $3$ corresponds to common subsequences of length $3$. We find that $\kappa_3(\rankings)=4$ with common subsequences being $bde$, $bdf$, $bef$ and $def$. Next, $\kappa_4(\rankings)=1$ with the common subsequence being $bdef$. There is no longer common subsequences since the length of the longest path in $G$ is $4$.
 
%From this    directed graph, we can construct the following four different sets of nodes based on the edges they connect with:
%\begin{itemize}
%	\item the set $\independent$ of isolated nodes: which have only loops;
%	\item the set  $\starting$ of source nodes: which have only outgoing edges, e.g, $\starting = \{A_{22}\}$ or item $\starting=\{b\}$;
%	\item the set $\terminal$ of terminal nodes: which have only  incoming edges, e.g, $\terminal= \{A_{33}, A_{66}\} $, or  $\terminal=\{ c, f\}$;
%	\item the set $\joint$ of joint nodes: which have both incoming and outgoing edges, e.g, $\joint=\{A_{44} ,A_{55}\}$ or $\joint=\{d, e\}$.
%\end{itemize}
%%
%For each node $v$ from $\joint=\{A_{44}, A_{55} \}$ , the traversal paths starting from its immediate predecessor (the node which has edges ending with $v$), via $v$ to its immediate  successor (the node connected  by an edge from $v$),  result in 3 subsequences with length of 3: $bde$, $bdf$,    $bef$ and $def$. This means that $\kappa_3(\rankings)=4$.
%
%Travelling   from  a node  $v_s\in  \starting$  via some nodes in $ \joint$ to end with  a node   $v_t\in \terminal$ will result in several paths and one of them may be the longest path from $v_s$ to $v_t$. For example, in Figure \ref{figure:example},  starting from node  $A_{22}$, via $A_{44}$ followed by $A_{55}$ to $A_{66}$ is the longest path in the graph. Such path $bdef$ is the only longest common subsequence with length of 4 in $\rankings$. Therefore $\kappa_4(\rankings)=1$.

In Figure \ref{figure:example}, the five dashed loops mean five singletons, i.e., $b$, $c$, $d$, $e$, $f$. As a result, $\kappa_1(\rankings)=5$. Therefore, we obtain
 $\kappa(\rankings)=\kappa_1(\rankings)+\kappa_2(\rankings)+\kappa_3(\rankings)+\kappa_4(\rankings)=5+7+4+1=17$.

\begin{figure}[t]
	\begin{center}
		{
			\begin{tikzpicture}
			\draw (-1.1,6.1) -- (4.8,6.1) --(4.8,0.4)--(-1.1,0.4)--(-1.1,6.1);
			
			\draw[xstep=1.1cm,ystep=1cm,gray, dotted,very thin] (-1,0.7) grid (4.4,5.7);
			\foreach \i [count=\xi] in {b,c,d,e,f}{
				\draw ({(\xi-1)*1.1},6.3) node {$\i$};					
				\draw ({(\xi-1)*1.1},5.8) node {0};
			}
			\draw (-0.9,6.3) node {$a$};
			\draw (-0.9,5.8) node {0};
			\draw (-1.5,5.8) node {$a$};
			\foreach \i [count=\xi] in {b,c,d,e, f}{
				\draw (-1.5, 6-\xi) node {$\i$};
				
			}
			\draw (-0.9,6.8-2) node[anchor=south] {0};					
			\draw (-0.9,6.8-3) node[anchor=south] {0};					
			\draw (-0.9,6.8-4) node[anchor=south] {0};					
			\draw (-0.9,6.8-5) node[anchor=south] {0};					
			\draw (-0.9,6.8-6) node[anchor=south] {0};					
			
			\draw (0,5) node (0005) {\textbf{1}};
			\draw (0,4) node (0004) {\textbf{1}};
			\draw (0,3) node (0003) {\textbf{1}};
			\draw (0,2) node (0002) {\textbf{1}};
			\draw (0,1) node (0001) {\textbf{1}};
			
			\draw (1.1,4) node (0104) {\textbf{1}};
			\draw (1.1,3) node (0103) {\textbf{0}};
			\draw (1.1,2) node (0102) {\textbf{0}};
			\draw (1.1,1) node (0101) {\textbf{0}};

			\draw (2.2,3) node (0203) {\textbf{1}};
			\draw (2.2,2) node (0202) {\textbf{1}};
			\draw (2.2,1) node (0201) {\textbf{1}};
			
			\draw (3.3,2) node (0302) {\textbf{1}};
			\draw (3.3,1) node (0301) {\textbf{1}};

			\draw (4.4,1) node (0401) {\textbf{1}};
			
			\draw [->, red]  (0005) to [out=300,in=150,looseness=0] (0104);
			\draw [->, red]  (0005) to [out=330,in=100,looseness=1] (0203);
			\draw [->, red]  (0005) to [out=360,in=80,looseness=1] (0401);	
			\draw [->, red]  (0005) to [out=340,in=100,looseness=1] (0302);
			\draw [->, red]  (0203) to [out=300,in=145,looseness=0] (0302);
			\draw [->, red]  (0302) to [out=300,in=140,looseness=0] (0401);
			\draw [->, red]  (0203) to [out=340,in=100,looseness=1] (0401);
			
			\draw [->,densely dashed, red]  (0104) to [out=170,in=250,looseness=7] (0104);
			\draw [->,densely dashed, red]  (0005) to [out=170,in=250,looseness=7] (0005);
			\draw [->,densely dashed, red]  (0203) to [out=170,in=250,looseness=7] (0203);
			\draw [->, densely dashed, red]  (0302) to [out=170,in=250,looseness=7] (0302);
			\draw [->, densely dashed,  red]  (0401) to [out=170,in=250,looseness=7]  (0401);
			
			\end{tikzpicture}
		}
	\end{center}
	\caption{ The  weighted directed graph of the matrix $\mathbf{A}$ (in Equation \eqref{eq:example:I}) with directed edges $(A_{ii}, A_{jj})$,   from $A_{ii}$   to $A_{jj}$, where $i<j$, if $A_{ji}\neq 0$, $A_{ii}\neq 0$, and  $A_{jj}\neq 0$.}	\label{figure:example}
\end{figure}
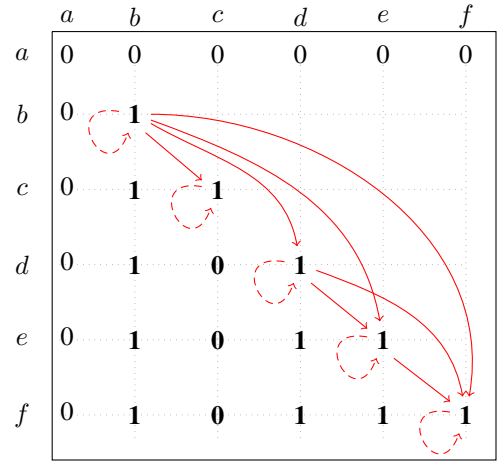

This process of finding patterns of various lengths  with graph representation not only allows us to calculate $\kappa_p(\rankings)$, but also makes it easy for us to calculate the number of all  common patterns $\kappa(\rankings)$ and the length of the longest common subsequences $\ell(\rankings)$.

\subsection{Consensus measure by graph representation }\label{section:new:framework}

\begin{figure}[t]
	\begin{center}
		{
			\begin{tikzpicture}
			\draw (-1.1,6.1) -- (4.8,6.1) --(4.8,0.3)--(-1.1,0.3)--(-1.1,6.1);
			
			\draw[xstep=1.1cm,ystep=1cm,gray, dotted,very thin] (-1,0.6) grid (4.4,5.7);
			\foreach \i [count=\xi] in {b,c,d,e,f}{
				\draw ({(\xi-1)*1.1},6.3) node {$\i$};					
				\draw ({(\xi-1)*1.1},5.8) node {0};
			}
			\draw (-0.9,6.3) node {$a$};
			\draw (-0.9,5.8) node {0};
			\draw (-1.5,5.8) node {$a$};
			\foreach \i [count=\xi] in {b,c,d,e, f}{
				\draw (-1.5, 6-\xi) node {$\i$};
				
			}
			\draw (-0.9,6.8-2) node[anchor=south] {0};					
			\draw (-0.9,6.8-3) node[anchor=south] {0};					
			\draw (-0.9,6.8-4) node[anchor=south] {0};					
			\draw (-0.9,6.8-5) node[anchor=south] {0};					
			\draw (-0.9,6.8-6) node[anchor=south] {0};					
			
			\draw (0,5) node (0005) {$\boldsymbol{\theta(b)}$};
			\draw (0,4) node (0004) {$\boldsymbol{\psi(b,c)}$};
			\draw (0,3) node (0003) {$\boldsymbol{\psi(b,d)}$};
			\draw (0,2) node (0002) {$\boldsymbol{\psi(b,e)}$};
			\draw (0,1) node (0001) {$\boldsymbol{\psi(b,f)}$};
			
			\draw (1.1,4) node (0104) {$\boldsymbol{\theta(c)}$};
			\draw (1.1,3) node (0103) {\textbf{0}};
			\draw (1.1,2) node (0102) {\textbf{0}};
			\draw (1.1,1) node (0101) {\textbf{0}};

			\draw (2.2,3) node (0203) {$\boldsymbol{\theta(d)}$};
			\draw (2.2,2) node (0202) {$\boldsymbol{\psi(d,e)}$};
			\draw (2.2,1) node (0201) {$\boldsymbol{\psi(d,f)}$};
			
			\draw (3.3,2) node (0302) {$\boldsymbol{\theta(e)}$};
			\draw (3.3,1) node (0301) {$\boldsymbol{\psi(e,f)}$};
			
			\draw (4.4,1) node (0401) {$\boldsymbol{\theta(f)}$};
			
			\draw [->, red]  (0005) to [out=300,in=150,looseness=0] (0104);
			\draw [->, red]  (0005) to [out=330,in=100,looseness=1] (0203);
			\draw [->, red]  (0005) to [out=360,in=80,looseness=1] (0401);	
			\draw [->, red]  (0005) to [out=340,in=100,looseness=1] (0302);
			\draw [->, red]  (0203) to [out=300,in=145,looseness=0] (0302);
			\draw [->, red]  (0302) to [out=300,in=140,looseness=0] (0401);
			\draw [->, red]  (0203) to [out=340,in=100,looseness=1] (0401);
			
			\draw [->,densely dashed, red]  (0104) to [out=200,in=250,looseness=4] (0104);
			\draw [->,densely dashed, red]  (0005) to [out=200,in=250,looseness=4] (0005);
			\draw [->,densely dashed, red]  (0203) to [out=200,in=250,looseness=4] (0203);
			\draw [->, densely dashed, red]  (0302) to [out=200,in=250,looseness=4] (0302);
			\draw [->, densely dashed,  red]  (0401) to [out=200,in=250,looseness=4]  (0401);
			
			%			\draw (2.5,1.5) node (035025) {$\theta(e)$ };
			
			\end{tikzpicture}
		}
	\end{center}
	\caption{ The   weighted directed graph generalized from the matrix $\mathbf{A}$ (in Equation \eqref{eq:example:I}) with  weights on edges $\psi(\sigma_i, \sigma_j)$ and weights on loops $\theta(\sigma)$.}	\label{figure:example:weighted}
\end{figure}
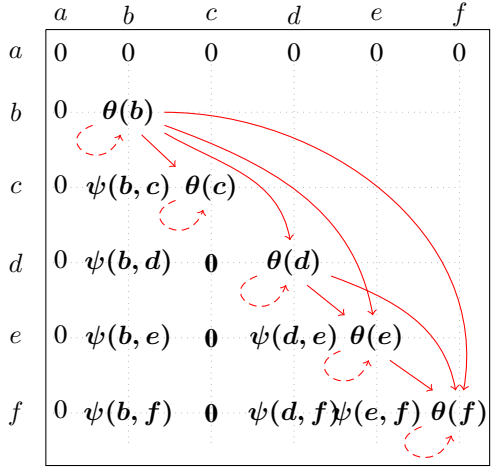

The above example shown in Fig. \ref{figure:example} presents an approach with graph representation for consensus measure of rankings when $f_\ranking(y_k)\in \{0, 1\}$.  This section will extends the graph representation to the consensus measure of rankings by calculating $\kappa(\rankings)$ and $\kappa_p(\rankings)$, when  $f_\ranking(y_k)\in [0, 1]$.

In Equation \eqref{eq:feature:mapping},  the  definition of  $f_\ranking(y_k)\in \{0, 1\}$ assumes that   features in $\featurespace$ are equally assigned  with a weight of 1. However, this is not true in many cases when  some features or items in $\featurespace$ is more important than the others \cite{Yilmaz:2008:rank:correlation,Webber:2010}. The  definition of  $f_\ranking(y_k)\in \{0, 1\}$ is not flexible enough to  differentiate   the importance of the features.  As such, we shall extend it to $f_\ranking(y_k)\in [0,1]$ if $y_k \subs \ranking$, and $f_\ranking(y_k)=0$ if $y_k \not\subs \ranking$ so that ``important'' features will receive higher values of $f_\ranking(y_k)$ while features with less importance will be ``penalized'' with  lower  $f_\ranking(y_k)$. Therefore, we rewrite  Equation \eqref{eq:feature:space} as 
\begin{align} 
\kappa(\rankings) & = \sum_{k=1}^{m} \prod_{\ranking\in \rankings}   f_{\ranking} (y_k) \label{eq:new:feature:space}
\end{align} 
for  $f_{\ranking} (y_k) \in [0,1]$. 

In the DAG shown in Figure \ref{figure:example}, we assume that the weights on the edges   equal to 1, which does not reflect the nature of how each subsequence is embedded in the original rankings. 	Consider the four rankings in  $\rankings=\{ \ranking_1=abcdef, \ranking_2=bdcefa,  \ranking_3=bcdeghijkf, \ranking_4=badefc\}$, item $f$   occurs at different positions in the rankings, which is shown in the following table:
\begin{center}
	\begin{tabular}{cccc}\hline
	$\ranking_1$  &	$\ranking_2$  & $\ranking_3$ & $\ranking_4$ \\
	6 & 5 & 10 & 5\\
	\hline
 \end{tabular}
\end{center}
The position for $f$ in $\ranking_3$ is 10, which deviates    from the positions of $f$ in  the other rankings substantially. In order to incorporate  those   factors which may affect the degree of consensus, we relax the assumption that the weights are identical to 1 and generalize the induced weighted DAG by introducing two functions $\theta(\sigma)$ and $\psi(\sigma_i, \sigma_j)$.  Figure \ref{figure:example:weighted} shows the new DAG, where each edge is associated with weight $\psi(\sigma_i,\sigma_j)$ and each loop is assigned with $\theta(\sigma)$, where   $\psi(\sigma_i,\sigma_j)\in (0,1]$ and  $\theta(\sigma)\in (0,1]$.  We will illustrate how the two functions reflect those factors in the following sections, and how they are related to $f_\ranking(y_k)\in [0,1]$ for Equation \eqref{eq:new:feature:space}.

For simpler presentation of our algorithm, we introduce the (left-continuous) Heaviside function 
\begin{equation} \label{eq:indicator}
H(x)=
\begin{cases}
1 & x> 0;  \\
0 & \text{otherwise}.
\end{cases}
\end{equation}
%
%It is a classical result in graph theory to count the number of paths between two nodes by considering the power of the adjacency matrix. In our application, we have the following result for calculating $\kappa_p(\rankings)$.

Now we   present the following  theorem for measuring the   consensus of rankings.
\begin{theorem}\label{theorem:framework}
		Given a set $\rankings=\{\ranking_1, \cdots, \ranking_N \}$  of $N$ rankings over a universe $\Sigma$, where each ranking $\ranking_k=r_{k_1}\cdots r_{k_m}$ is naturally associated with a map  $\eta_k: \Sigma\rightarrow \{0, 1, \dots, |\Sigma|\}$ defined as
		\begin{equation}\label{eq:eta}
		\eta_k(\sigma)=
		\begin{cases}
		0, & \sigma\not\sqsubseteq \ranking_k;\\
		j, & \sigma = r_{k_j}.
		\end{cases}
		\end{equation}
		
Let $\ranking_x = r_{x_1}r_{x_2}\cdots r_{x_n}$ be an arbitrary ranking from $\rankings$, $n=|\ranking_x|$, and ${\mathbf{A}}=\left({A}_{ij}\right)_{n\times n}$ be an adjacency  matrix of a graph, where ${A}_{ij}=$
	{\small 
		\begin{equation} \label{eq:matrix:K0:theorem}
	\begin{cases}
	0, & i<j;\\
	\theta(r_{x_i})\prod_{k=1}^{N} H\!\left(\eta_k(r_{x_i})\right),   &i= j; \\
	\psi(r_{x_i}, r_{x_j}) \prod_{k=1}^{N} H\!\left(\eta_k(r_{x_i})\!-\!\eta_k(r_{x_j})\right)H({A}_{ii})H({A}_{jj}), &i> j. \\
	\end{cases}
	\end{equation}}
	and  let ${\mathbf{L}} = \left({L}_{ij}\right)_{n\times n} $ be strictly lower triangle of ${\mathbf{A}}$,   and $\mathbf{z}=(1,\dots, 1)^T$ be a vector of all ones. 
	Then, 
	\begin{equation} \label{eq:kappa:p:new}
	{\kappa}_p(\rankings)=
	\begin{cases}
	\tr(\bA), & p=1;\\
	\mathbf{z}^T {\bL}^{p-1}\mathbf{z}	,   & p>1.	 
	\end{cases}
	\end{equation}
\end{theorem}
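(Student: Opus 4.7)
The plan is to prove the theorem by showing that $\bL$ is (up to relabeling positions of $\ranking_x$ as vertices) the weighted adjacency matrix of the DAG from the motivating example, so that entries of $\bL^{p-1}$ record weighted sums over length-$(p-1)$ paths, and these paths are in bijection with length-$p$ common subsequences of $\rankings$. The $p=1$ formula then falls out of the trace, while the $p>1$ formula comes from expanding $\mathbf{z}^T \bL^{p-1}\mathbf{z}$ as a sum of path weights.

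First, I would interpret the entries of $\bA$ and $\bL$. From \eqref{eq:matrix:K0:theorem}, the diagonal entry $A_{ii}$ equals $\theta(r_{x_i})$ exactly when $\eta_k(r_{x_i})>0$ for every $k$, i.e., when item $r_{x_i}$ appears in every ranking of $\rankings$, and is $0$ otherwise. For $i>j$, the entry $L_{ij}=A_{ij}$ equals $\psi(r_{x_i},r_{x_j})$ exactly when $A_{ii},A_{jj}$ are both nonzero and $\eta_k(r_{x_i})-\eta_k(r_{x_j})>0$ for every $k$; together with the positivity of $\eta_k(r_{x_j})$ supplied by $H(A_{jj})$, this last condition encodes that $r_{x_j}$ strictly precedes $r_{x_i}$ in every ranking. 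Hence $\bL$ is the strictly lower triangular weighted adjacency matrix of the DAG whose edges are exactly the common ordered pairs, as in Figure~\ref{figure:example:weighted}.

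For $p=1$, $\tr(\bA)=\sum_i A_{ii}$ sums $\theta(r_{x_i})$ over positions $i$ for which $r_{x_i}$ lies in every ranking. Since $\ranking_x\in\rankings$, every common singleton of $\rankings$ appears in $\ranking_x$ and is enumerated exactly once this way, so $\tr(\bA)=\kappa_1(\rankings)$ under the weight $\theta$. For $p>1$, I would expand
\[
\mathbf{z}^T \bL^{p-1} \mathbf{z} \;=\; \sum_{j_1,\ldots,j_p} L_{j_p,j_{p-1}}\,L_{j_{p-1},j_{p-2}}\cdots L_{j_2,j_1},
\]
summed over all $(j_1,\ldots,j_p)\in\{1,\ldots,n\}^p$. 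Strict lower triangularity of $\bL$ forces each nonzero summand to satisfy $j_1<j_2<\cdots<j_p$, and by the interpretation of $L_{ij}$ above, each consecutive factor being nonzero means the pair $r_{x_{j_t}}r_{x_{j_{t+1}}}$ is a common $2$-subsequence. Transitivity of $<$ applied ranking-by-ranking then upgrades the pairwise inequalities $\eta_k(r_{x_{j_{t+1}}})>\eta_k(r_{x_{j_t}})>0$ into a chain $0<\eta_k(r_{x_{j_1}})<\cdots<\eta_k(r_{x_{j_p}})$ for every $k$, so the whole tuple $r_{x_{j_1}}\cdots r_{x_{j_p}}$ is a common $p$-subsequence of $\rankings$.

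The converse direction uses that any common $p$-subsequence is in particular a subsequence of $\ranking_x$, hence arises from a unique increasing tuple $(j_1,\ldots,j_p)$, and contributes weight $\prod_{t=1}^{p-1}\psi(r_{x_{j_{t+1}}},r_{x_{j_t}})$, matching the path weight assigned by \eqref{eq:new:feature:space} under the extended framework. Summing over all such common subsequences gives \eqref{eq:kappa:p:new}. The main obstacle, though mild, is the transitivity step: $\bL$ only encodes pairwise common precedences, so one must verify that simultaneous pairwise consistency across all $N$ rankings upgrades to simultaneous $p$-wise consistency; this is exactly where the $H(A_{ii})H(A_{jj})$ factors earn their keep, by ruling out cases in which $\eta_k(r_{x_{j_t}})=0$ for some $t,k$ that would otherwise satisfy the strict inequality vacuously.
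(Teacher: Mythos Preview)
Your proposal is correct and follows essentially the same approach as the paper: interpret $\bL$ as the adjacency matrix of the DAG of common ordered pairs, and identify entries of $\bL^{p-1}$ with (weighted) counts of length-$p$ common subsequences via a path/subsequence bijection. The paper's own proof is terser---it handles $p=1$ identically and for $p\geq 2$ simply invokes a ``classical inductive argument'' that $(\bL^{p-1})_{ij}$ counts length-$p$ common subsequences with prescribed endpoints---so your explicit expansion of $\mathbf{z}^T\bL^{p-1}\mathbf{z}$ and your transitivity step are a more detailed unfolding of the same idea, and in fact more carefully address the weighted case than the paper does.
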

\begin{proof}

Note that $\mathbf{z}^T \bM\mathbf{z}$ gives the sum of all entries in a matrix $\bM$. By definition of $
\bA$ we know that $\bA_{ii}>0$ if $r_{x_i}\in \subseqs$ and $\bA_{ii}=0$ otherwise. It follows that $\kappa_1(\bA)$ is the number of $1$s on the diagonal line, which equals to,  noting that all other entries on the diagonal lines are $0$s, the sum of diagonal entries of $\bA$, or $\tr(\bA)$. 

For $p\geq 2$, it is a classical inductive argument that $(\bL^{p-1})_{ij} = N_p(i,j)$ when $i > j$, where $N_p(i,j)$ is the number of common sequences of length $p$ which begin with $r_{x_i}$ and end with $r_{x_j}$. The advertised result then follows from the fact that all rankings have distinct items and thus the common subsequences also have distinct items.
%
%Next we prove our claim inductively. The base case $p=2$ follows immediately from the definition of $\bA$. Consider a general $p > 2$. For any $p$-long common subsequence, its first $p-1$ items form a common subsequence of length $p-1$ and its last two items form a common subsequence of length $2$. Furthermore, any $p$-long common subsequence is formed in this way, that is, it is the join of a $(p-1)$-long common subsequence $s_1$ and a $2$-long common subsequence $s_2$ satisfying that the last item of $s_1$ and the first item of $s_2$ are the same. This implies that
%\[
%N_p(i,j) = \sum_{k=j+1}^{i-1} N_{p-1}(i,k) N_2(k,j).
%\]
%By inductive hypothesis,
%\[
%N_p(i,j) = \sum_{k=j+1}^{i-1} (\bL^{p-2})_{ik}\bL_{kj} = \sum_k (\bL^{p-2})_{ik}\bL_{kj} = (\bL^{p-1})_{ij},
%\]
%where the second equality follows from the definitions $\bA$ and $\bL$ and the third equality from the definition of matrix multiplication. This shows the inductive step and the overall proof is now complete.
\end{proof}

Theorem \ref{theorem:framework} shows that   the individual items are weighted by $\theta(\sigma)$ and the edges between any two items by   $\psi(\sigma_i,\sigma_j)$, reflecting the strength of the relationship between two items $\sigma_i$ and $\sigma_j$.

\subsubsection{\textbf{$\theta(\sigma)$ -- weighted by standard deviation of item's positions}}

The position of item $\sigma$ in a ranking is an indication of the strength of  being preferred. To show the importance of the position of $\sigma$, we define $\mu(\sigma)$, the average of the  positions of $\sigma$ in $\ranking_k$, as follows.  
\begin{equation}
\mu(\sigma)=
\begin{cases}
-\infty & \sigma\not\sqsubseteq \rankings \\
\frac{1}{N}\sum_{k=1}^{N}\eta_k(\sigma)   & \sigma \sqsubseteq \rankings
\end{cases}
\end{equation}

If an item is placed in a small range of positions throughout all rankings, it is assumed that this item is preferred consistently at the same level by all rankings. On the other hand, if an item has a low position $\eta_i(\sigma)$ in one ranking $\ranking_i$ while has a high  position  $\eta_j(\sigma)$ in another ranking $\ranking_j$, the big difference between the positions  $|\eta_j(\sigma)-\eta_i(\sigma)|$ indicates the inconsistency of the preferences over this item. To take into account the differences of item's positions  in consensus measure,   we  define 
\begin{equation} \label{equation:theta}
\theta(\sigma)=\gamma^d,
\end{equation}
where
$d= \frac{1}{N}\sum_{k=1}^{N} |\eta_k(\sigma)- \mu(\sigma)| $, in order  to  weigh the item using the positions $\eta_k(\sigma)$. 
In fact, when feature $y_k$ is a singleton (i.e.,  $y_k=\sigma$), it is clear that  \[\theta(\sigma)=\prod_{\ranking\in \rankings} f_\ranking(\sigma),\] where $f_\ranking(\sigma)=\frac{1}{N}|\eta_k(\sigma)- \mu(\sigma)|$ for Equation \eqref{eq:new:feature:space}. 

\subsubsection{\textbf{$\psi(\sigma_i,\sigma_j)$ -- weighted by gaps}}
The gap between items has been used for pairwise  kernel functions or sensitivity detection \cite{Thompson:gap,ShaweTaylor_book}. Now we extend this to the set-wise consensus measure of $\kappa_p(\rankings)$. 

The  weighted DAG in Figure \ref{figure:example:weighted} shows that the edges $(b,f)$ and $(b,c)$, i.e., subsequences $bf$ and $bc$ are quite different in terms of the distance between $b$ and $f$, and between $b$ and $c$, in $\ranking_1$. There are no items between $b$ and $c$, however $b$ and $f$ are separated by three other items $c$, $d$ and $e$, which means that $c$ is much more preferred than $f$.  Therefore, for each $\ranking_k$ and every 2-long subsequence $\sigma_i\sigma_j$, we define the gap $\varpi_k(\sigma_i,\sigma_j)=\eta_k(\sigma_j)-\eta_k(\sigma_i)$, which indicates  %connected by an edge for  a 2-long subsequence $\sigma_i,\sigma_j$ indicate 
how much $\sigma_i$ is more preferred than its successor $\sigma_j$ in the 2-long subsequence $\sigma_i\sigma_j$ with respect to the original ranking sequence $\ranking_k$. The following table shows the gaps for $bf$ and $bc$ with respect to the example rankings.

\begin{center}
	\begin{tabular}{l|llll}
		\hline
		~&  $\ranking_1$ & $\ranking_2$ &$\ranking_3$ & $\ranking_4$	\\
		\hline \hline
		$\varpi_k(b,c)$	&  1 & 2 & 1 & 5   \\
		$\varpi_k(b,f)$	&  4 & 4 & 9 &  4  \\
		\hline
		
	\end{tabular}
	
\end{center}

Clearly, the accumulated gaps for $b$ and $f$ is much bigger than that for $b$ and $c$. This suggests that $f$ is less likely to be preferred over $c$. To take into account this likelihood, we define  
\begin{equation}\label{eq:psi}
	\psi(\sigma_i,\sigma_j)=\lambda^g,
\end{equation}
where $0<\lambda\leq 1$ and 	 $g=\frac{1}{N}\sum_{k=1}^{N} |\varpi_k(\sigma_i,\sigma_j)| $, so that any subsequence $\sigma_i\sigma_j$ with bigger average gaps  will be ``penalized''. 

Now we relate $\psi(\sigma_i,\sigma_j)$ to $f_\ranking(y_k)$ for Equation \eqref{eq:new:feature:space}. For $p>1$, a $p$-long subsequence  $y_k=(\sigma_{k_1},\sigma_{k_2},\dots, \sigma_{k_{p}}) \in \featurespace$ is represented by  a $(p-1)$-long path $(\sigma_{k_1},\sigma_{k_2}),  \dots, (\sigma_{k_{p-1}},\sigma_{k_{p}})$ in the graph. As each edge $(\sigma_{k_i},\sigma_{k_j})$ has weight $\psi(\sigma_{k_i},\sigma_{k_j})$, defining  \[f_\ranking(y_k)=\prod_{i=1}^{p-1} \psi(\sigma_{k_i},\sigma_{k_{i+1}})\] makes Equation \eqref{eq:kappa:p:new} consistent with Equation \eqref{eq:new:feature:space}.

\subsubsection{\textbf{An example for  $\psi(\sigma_i, \sigma_j)=1$  and  $\theta(\sigma)=1$}}

Comparing Fig. \ref{figure:example} and Fig. \ref{figure:example:weighted}, obviously the example in Section \ref{section:motivating:example} is a special case for Theorem \ref{theorem:framework} when $\psi(\sigma_i, \sigma_j)=1$  and  $\theta(\sigma)=1$.

Now we can use Theorem \ref{theorem:framework} to calculate the consensus score for the example in Section \ref{section:motivating:example}.

\begin{example}\label{example:theorem}
	
	Consider $\rankings=\{ \ranking_1=abcdef, \ranking_2=bdcefa,  \ranking_3=bcdeghijkf, \ranking_4=badefc\}$,   based on the matrix $\mathbf{A}$ in Equation \eqref{eq:example:I}, we have 
	\[\mathbf{L}=
	\left(\begin{matrix}
	0 & 0 & 0 & 0 & 0 & 0\cr
	0 & 0 & 0 & 0 & 0 & 0\cr
	0 & 1 & 0 & 0 & 0 & 0\cr
	0 & 1 & 0 & 0 & 0 & 0\cr
	0 & 1 & 0 & 1 & 0 & 0\cr
	0 & 1 & 0 & 1 & 1 & 0\cr
	\end{matrix}
	\right), ~	
	%	\mathbf{L}^{2}=
	%	\left(\begin{matrix}
	%	0 & 0 & 0 & 0 & 0 & 0\cr
	%	0 & 0 & 0 & 0 & 0 & 0\cr
	%	0 & 1 & 0 & 0 & 0 & 0\cr
	%	0 & 1 & 0 & 0 & 0 & 0\cr
	%	0 & 1 & 0 & 1 & 0 & 0\cr
	%	0 & 1 & 0 & 1 & 1 & 0\cr
	%	\end{matrix}
	%	\right)	
	\mathbf{L}^2=
	\left(\begin{matrix}
	0 & 0 & 0 & 0 & 0 & 0\cr
	0 & 0 & 0 & 0 & 0 & 0\cr
	0 & 0 & 0 & 0 & 0 & 0\cr
	0 & 0 & 0 & 0 & 0 & 0\cr
	0 & 1 & 0 & 0 & 0 & 0\cr
	0 & 2 & 0 & 1 & 0 & 0\cr
	\end{matrix}
	\right),	
	\]
	\[
	\mathbf{L}^3=
	\left(\begin{matrix}
	0 & 0 & 0 & 0 & 0 & 0\cr
	0 & 0 & 0 & 0 & 0 & 0\cr
	0 & 0 & 0 & 0 & 0 & 0\cr
	0 & 0 & 0 & 0 & 0 & 0\cr
	0 & 0 & 0 & 0 & 0 & 0\cr
	0 & 1 & 0 & 0 & 0 & 0\cr
	\end{matrix}
	\right),
	~ \mathbf{L}^4=
	\left(\begin{matrix}
	0 & 0 & 0 & 0 & 0 & 0\cr
	0 & 0 & 0 & 0 & 0 & 0\cr
	0 & 0 & 0 & 0 & 0 & 0\cr
	0 & 0 & 0 & 0 & 0 & 0\cr
	0 & 0 & 0 & 0 & 0 & 0\cr
	0 & 0 & 0 & 0 & 0 & 0\cr
	\end{matrix}
	\right).
	\]	
	Then, 
	\[
	\begin{array}{ll}
	\kappa_1(\rankings)=\tr(\bA)=5, & \kappa_2(\rankings)=\mathbf{z}^T\mathbf{L}\mathbf{z}=7,\\
	\kappa_3(\rankings)=\mathbf{z}^T\mathbf{L}^2\mathbf{z}=4, & \kappa_4(\rankings)=\mathbf{z}^T\mathbf{L}^3\mathbf{z}=1.
	\end{array}
	\]
\end{example}

\subsection{ $	\ell(\rankings) $ and $	\kappa(\rankings)$ } 
In Example \ref{example:theorem}, we observe that $\mathbf{L}^p = 0$ for $p\geq 4$, which implies that there are no common subsequences with length more than 4 and hence the length of the longest common subsequences of $\rankings$ is 4.  Based on this fact, the  following corollary of Theorem \ref{theorem:framework}  provides an  algorithm for calculating $\ell(\rankings)$. 
\begin{corollary}\label{lemma:lcs}
	Under the assumptions in  Theorem \ref{theorem:framework} ,   the length   $	\ell(\rankings) $ of the longest  subsequences in $\subseqs(\rankings)$ can be obtained by   
	$$
	\max \left\{  p :  \kappa_p(\rankings)>0  \right\}.
	$$
\end{corollary}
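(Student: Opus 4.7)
The plan is to deduce the corollary directly from Theorem \ref{theorem:framework} by showing the equivalence $\kappa_p(\rankings)>0 \iff \subseqs_p(\rankings)\neq\emptyset$, after which the result follows from the definition $\ell(\rankings)=\max\{|z|:z\in\subseqs(\rankings)\}$.

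First I would handle the case $p=1$. By Theorem \ref{theorem:framework}, $\kappa_1(\rankings)=\tr(\bA)=\sum_i \theta(r_{x_i})\prod_{k=1}^N H(\eta_k(r_{x_i}))$. Since $\theta(\sigma)\in(0,1]$ and each $H(\cdot)\in\{0,1\}$, the $i$-th summand is strictly positive iff $r_{x_i}$ appears in every $\ranking_k\in\rankings$, i.e., iff the singleton $r_{x_i}$ belongs to $\subseqs_1(\rankings)$. Hence $\kappa_1(\rankings)>0$ iff $\subseqs_1(\rankings)\neq\emptyset$.

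Next, for $p\geq 2$, I would argue that $\kappa_p(\rankings)=\mathbf{z}^T\bL^{p-1}\mathbf{z}$ is a sum, over all length-$(p-1)$ paths in the DAG encoded by $\bL$, of products of non-negative edge weights $\psi(\cdot,\cdot)$ modulated by the diagonal indicator factors. Because each nonzero $\bL_{ij}$ equals $\psi(r_{x_i},r_{x_j})\in(0,1]$ when a corresponding edge exists and is $0$ otherwise, every nonzero term in the expansion of $\mathbf{z}^T\bL^{p-1}\mathbf{z}$ is strictly positive. Thus $\kappa_p(\rankings)>0$ iff there exists at least one path of length $p-1$ in the DAG; by the correspondence established in the proof of Theorem \ref{theorem:framework}, such a path corresponds bijectively to a common subsequence of length $p$, so $\kappa_p(\rankings)>0 \iff \subseqs_p(\rankings)\neq\emptyset$.

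Finally, I would combine these two cases and invoke the definition $\ell(\rankings)=\max\{p:\subseqs_p(\rankings)\neq\emptyset\}$ to conclude $\ell(\rankings)=\max\{p:\kappa_p(\rankings)>0\}$. The only subtlety — and the one step that requires care rather than routine algebra — is the positivity argument: since all entries of $\bL$ are non-negative and the weights $\theta,\psi$ take values in $(0,1]$ on their support, there is no cancellation in $\mathbf{z}^T\bL^{p-1}\mathbf{z}$, so a nonzero count of paths cannot be masked by the weighting. This monotone positivity is what makes the corollary robust to the general weighted formulation, not just the unweighted special case of Section \ref{section:motivating:example}.
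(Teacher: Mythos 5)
Your argument is correct and is essentially the formalization the paper has in mind: the paper states this corollary without an explicit proof, relying on the observation (made via Example \ref{example:theorem}) that $\kappa_p(\rankings)$ vanishes exactly when no length-$(p-1)$ path, i.e.\ no $p$-long common subsequence, exists. Your positivity point --- that $\theta,\psi\in(0,1]$ on their support means nonzero path counts cannot be masked by cancellation in $\mathbf{z}^T\bL^{p-1}\mathbf{z}$ --- is precisely the detail needed to extend that observation to the weighted setting, so nothing further is required.
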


\begin{corollary}\label{lemma:rho}
	Under the assumptions in  Theorem \ref{theorem:framework} ,  
	$$
	\kappa(\rankings)=\kappa_1(\rankings)+\mathbf{z}^T (\bI-\bL)^{-1}\mathbf{z} - n
	$$
	where $n=|\ranking_x|$ and $\bI$ is the identity matrix of size $n\times n$. Consequently $\kappa(\rankings)$ can be computed in $O(n+|E|)$ time.
\end{corollary}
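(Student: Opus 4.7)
The plan is to write $\kappa(\rankings) = \sum_{p\geq 1} \kappa_p(\rankings)$, substitute the formula from Theorem \ref{theorem:framework}, and then recognize the resulting matrix series as a Neumann series. From the theorem we have $\kappa_1(\rankings) = \tr(\bA)$ and $\kappa_p(\rankings) = \mathbf{z}^T \bL^{p-1}\mathbf{z}$ for $p \geq 2$, so
\begin{equation*}
\kappa(\rankings) = \kappa_1(\rankings) + \mathbf{z}^T\!\left(\sum_{p\geq 1} \bL^p\right)\!\mathbf{z}.
\end{equation*}
Since $\bL$ is strictly lower triangular of size $n\times n$, it is nilpotent with $\bL^n = 0$, so the series is in fact a finite sum and there are no convergence worries. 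Applying the identity $(\bI-\bL)^{-1} = \sum_{p\geq 0}\bL^p = \bI + \sum_{p\geq 1}\bL^p$ yields $\sum_{p\geq 1}\bL^p = (\bI-\bL)^{-1} - \bI$, and since $\mathbf{z}^T\bI\mathbf{z} = n$, the claimed closed form follows immediately.

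For the complexity claim, the key observation is that one should never explicitly form $(\bI-\bL)^{-1}$. Instead, let $\mathbf{x} = (\bI-\bL)^{-1}\mathbf{z}$, i.e.\ solve the triangular system $(\bI-\bL)\mathbf{x} = \mathbf{z}$ by forward substitution. Because $\bI - \bL$ is lower triangular with $1$s on the diagonal, forward substitution runs in time proportional to the number of nonzero entries of $\bI - \bL$, which is $n$ (for the diagonal) plus the number of nonzero strictly-lower entries of $\bL$, namely $|E|$ (the number of non-loop edges in the induced DAG by Theorem \ref{theorem:framework}). This gives $O(n + |E|)$ time for the linear solve, plus $O(n)$ each for computing $\tr(\bA)$ and $\mathbf{z}^T\mathbf{x}$, so the total is $O(n+|E|)$ as stated.

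The only mild subtlety is making the appeal to $(\bI-\bL)^{-1} = \sum_{p\geq 0}\bL^p$ rigorous, but here it is purely algebraic (not analytic) because of nilpotency: $(\bI-\bL)(\bI + \bL + \bL^2 + \cdots + \bL^{n-1}) = \bI - \bL^n = \bI$, so the finite sum really is the inverse. I don't expect any real obstacle; the argument is essentially bookkeeping, and the nontrivial content is already in Theorem \ref{theorem:framework}.
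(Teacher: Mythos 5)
Your proposal is correct and follows essentially the same route as the paper: both decompose $\kappa(\rankings)=\kappa_1(\rankings)+\sum_{p\geq 2}\kappa_p(\rankings)$, collapse the resulting finite power sum via the nilpotency identity $(\bI-\bL)(\bI+\bL+\cdots+\bL^{n-1})=\bI-\bL^n=\bI$, and obtain the runtime by solving the lower-triangular system $(\bI-\bL)\mathbf{x}=\mathbf{z}$ with forward elimination in $O(n+|E|)$ time rather than forming the inverse. No gaps to report.
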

\begin{proof}
	Since the longest possible length of a common subsequence is $n$, Theorem~\ref{theorem:framework}  implies that
	\[
	\kappa(\rankings) = \kappa_1(\rankings) + \sum_{p=2}^n \kappa_p(\rankings)
	= \tr(\bA) + \sum_{i=1}^{n-1} \mathbf{z}^T \bL^i \mathbf{z}.
	\]
	Invoking the identity
	$(\bI - \bL)(\bI + \bL + \bL^2 +\cdots + \bL^{n-1}) = \bI - \bL^n$
	and the observation that $\bL^n= 0$ since $\bL$ is strictly lower triangular, we obtain that
	\begin{eqnarray}
	\kappa(\rankings) &=& \tr(\bA) + \mathbf{z}^T((\bI-\bL)^{-1}-\bI)\mathbf{z} \nonumber \\ 
	&=& \tr(\bA) + \mathbf{z}^T(\bI-\bL)^{-1}\mathbf{z} - n.		
	\end{eqnarray}
Now we discuss the runtime. Since $\bL$ is a strictly lower triangular matrix, $\bI-\bL$ is lower triangular. Note that computing $\bM^{-1}u$ is equivalent to solving $\bM v=u$ and, if $\bM$ is lower triangular, can be done efficiently in $O(\nnz(\bM)+n)$ time using forward elimination (degenerated Gaussian elimination), where $\nnz(\bM)$ denotes the number of non-zero entries in $\bM$. Therefore $(\bI-\bL)^{-1}\mathbf{z}$ can be computed in $O(n+|E|)$ time, and thus $\kappa(\rankings)$ in $O(n+|E|)$ time.
\end{proof}
We remark that the runtime in Corollary~\ref{lemma:rho} is significantly faster, by a factor of $n$, than the na\"ive algorithm to sum up $\kappa_p(A)$ over $p$, which would take $O(n^2+n|E|)$ time.

%With $\kappa_p(\rankings)$, we are able to give more weight to those longer patterns and penalize the shorter patterns in $\subseqs(\rankings)$ by using      
%\begin{equation}\label{eq:consensus:linear:combine}
%\sum_{p=1}^{\ell(\rankings)} \theta_p\kappa_p(\rankings)
%\end{equation} 
%where $0\leq \theta_p\leq 1$, $\sum_{p=1}^{\ell(\rankings)} \theta_p=1$. 

\subsection{Remarks} 
%
%{\LinesNumberedHidden
%	\begin{algorithm}[!t]
%		Initialize $\bM =\left( M_{ki}\right)_{N\times n}$, with  $M_{ki}=\eta_k(r_{x_i})$  \nllabel{alg:init:M}\; 
%		Initialize $\mathbf{u} = (u_i)\in \mathbb{R}^n$, with  $u_{i}=\frac{1}{N}\sum_{k=1}^{N} M_{ki}$  \nllabel{alg:init:u}\; 
%		Initialize $\mathbf{d} = (d_i)\in \mathbb{R}^n$, with  $d_{i}=\sqrt{\frac{1}{N}\sum_{k=1}^{N} |M_{ki}-\mu_i |}$  \nllabel{alg:init:d}\; 
%		Initialize $\mathbf{W} =\left( W_{ij}\right)_{n\times n}$, with  $W_{ij}=\frac{1}{N}\sum_{k=1}^{N} |M_{ki}-M_{kj}| $  \nllabel{alg:init:W}\; 
%		\For{$i \leftarrow 1$ \KwTo $n$}{  
%			${A}_{ii}\gets \gamma^{d_i}\prod_{k=1}^{N} \delta(M_{ki})$\nllabel{alg:init:E:ii}\;
%			\For{$j \leftarrow 1$ \KwTo $i-1$}
%			{  
%				${A}_{ij}\gets\lambda^{W_{ij}}\prod_{k=1}^{N} \delta(M_{ki}-M_{kj})\delta({A}_{ii})\delta({A}_{jj}) $ \nllabel{alg:init:E:ij}\;
%			}
%		}
%		\caption{Modification of Algorithm~\ref{alg:unweighted} for weighted $\kappa_p(\rankings)$} \label{alg:weighted}
%	\end{algorithm}
%}

\begin{algorithm}[!t]
	\KwData{A set of rankings $\rankings=\{\ranking_1,\dots,  \ranking_N\}$}
	\KwResult{${\kappa}_1(\rankings),\dots,{\kappa}_\ell(\rankings) $}
	Pick an arbitrary\footnotemark  ~$\ranking_x = r_{x_1}\cdots r_{x_n}\in \rankings$\;
	${\mathbf{A}} \gets \left(\mathbf{0}\right)_{n\times n}$\;
			Initialize $\bM =\left( M_{ki}\right)_{N\times n}$, with  $M_{ki}=\eta_k(r_{x_i})$  \nllabel{alg:init:M}\; 
	Initialize $\mathbf{u} = (u_i)\in \mathbb{R}^n$, with  $u_{i}=\frac{1}{N}\sum_{k=1}^{N} M_{ki}$\; \nllabel{alg:init:u}
	Initialize $\mathbf{d} = (d_i)\in \mathbb{R}^n$, with  $d_{i}=\sqrt{\frac{1}{N}\sum_{k=1}^{N} |M_{ki}-\mu_i |}$\; \nllabel{alg:init:d}
	Initialize $\mathbf{W} =\left( W_{ij}\right)_{n\times n}$, with  $W_{ij}=\frac{1}{N}\sum_{k=1}^{N} |M_{ki}-M_{kj}|$\;\nllabel{alg:init:W}
	\For{$i \leftarrow 1$ \KwTo $n$}{   \nllabel{alg:loop:A:begin} 
		${A}_{ii}\gets \gamma^{d_i}\prod_{k=1}^{N} H(M_{ki})$\nllabel{alg:init:E:ii}\;
		\For{$j \leftarrow 1$ \KwTo $i-1$}
		{  
			${A}_{ij}\!\gets\!\lambda^{W_{ij}}\prod_{k=1}^{N} H(M_{ki}\!-\!M_{kj})H(\!{A}_{ii}\!)H(\!{A}_{jj}\!)$\;\nllabel{alg:init:E:ij}
		}
	}\nllabel{alg:loop:A:end} 
	%	\tcc{End of initialization}\
	${\bL}\gets\text{strictly lower triangular part of }\bA$\; 
	$p\gets 1$\;
	${\kappa}_p(\rankings)\gets \tr(\bA)$ \nllabel{alg:k:1}\;
	$\mathbf{y}\gets \mathbf{z}$\;
	\While{${\kappa}_p(\rankings)>0$}{\nllabel{alg:loop:while:begin} 
		$p\gets p+1$\;
		$\mathbf{y}\gets \mathbf{L}\mathbf{y}$ \nllabel{alg:Ly} \;
		${\kappa}_p(\rankings)\gets \mathbf{z}^T \mathbf{y}	$\;
	}
	$\ell\gets p-1$\;
	{return ${\kappa}_1(\rankings),\dots,{\kappa}_\ell(\rankings) $} 
	\caption{Pseudocode to compute $\kappa_p(\rankings)$} \label{alg}
\end{algorithm}
\footnotetext{For efficiency purpose, we can choose the ranking with the least number of items.}

Algorithm \ref{alg} is the pseudocode for Theorem \ref{theorem:framework}.  Generating $\bA$ (Line \ref{alg:loop:A:begin}--\ref{alg:loop:A:end}) takes $O(N n^2)$ time. Computing $\kappa_1(\rankings) = \tr(A)$ (Line \ref{alg:k:1}) takes $O(n)$ time. For each $p\geq 2$, the matrix-vector multiplication $\bL \mathbf{y}$ in Line \ref{alg:Ly} takes $O(n+|E|)$ time, since $\bL$ has at most $O(|E|)$ non-zero entries. In fact, $|E|=\kappa_2(\rankings)$.  Line 20 takes $O(n)$ time. Overall computing $\kappa_p(\rankings)$, after generation of $\bA$, takes $O(p(n+|E|))$ time for $p\geq 2$.

\subsubsection{Sensitivity detection by gaps and positions of items} \label{section:sensitivity}

There are some differences among $\kappa_p(\rankings)$. Note that $\kappa_1(\rankings)$ is controlled by the $\gamma$ and $d$ in Equation \eqref{equation:theta}, where $d$ is a factor to reflect the deviation of each item's positions in the rankings. Higher disagreement of  the items positions in the rankings will result in lower $\kappa_1(\rankings)$. Hence $\kappa_1(\rankings)$ is sensitivity detection of the variation of  items positions.   Whereas for $p>1$, the measure $\kappa_p(\rankings)$ takes into account the extent of the relationship between two items by incorporating $\lambda$ and $g$ in Equation \eqref{eq:psi}. 

We demonstrate such ability for sensitivity detection in terms of the gaps and positions of items in Section \ref{section:experiments}.  

\subsubsection{Duplicate rankings}
In the above example of $\rankings=\{ \ranking_1=abcdef, \ranking_2=bdcefa,  \ranking_3=bcdeghijkf, \ranking_4=badefc\}$ and its adjacent matrix $\mathbf{A}$ in Equation \eqref{eq:example:I}, we assume that there are only distinct rankings in $\rankings$. However, this is not the case, especially in the group decision making process, where there may be duplicate rankings produced by the experts. For example, we may have a multi-set $\rankings'=\{ \ranking_1=abcdef, \ranking_2=bdcefa,  \ranking_3=bcdeghijkf, \ranking_4=badefc, \ranking_5=badefc,\ranking_6=badefc$, $\ranking_7=badefc\}$, which contains duplicate ranking of $badefc$. Obviously,  $\rankings'$ has higher degree of consensus than $\rankings$. However, as $\kappa_p(\rankings')=\kappa_p(\rankings)$,  that is,  $\kappa_p(\cdot)$ cannot discriminate $\rankings'$ and $\rankings$. In order to distinguish the difference, we let $o(\ranking)$ be the number of occurrences of ranking $\ranking$ in $\rankings'$,   and  define 
\[
\hat{\kappa}(\rankings')=\kappa(\rankings')+\frac{|\rankings'|\cdot s}{|\rankings|},
\] 
where $s=\max\{  o(\ranking) : \forall \ranking\in \rankings' \}$ and $\rankings$ is the set of all distinct rankings in $\rankings'$.

{
\subsubsection{Rankings with ties}	
Rankings with ties occur when the preference scores over some items are identical. 
Let $\ranking_k=\ties_{k_1}\dots\ties_{k_n}$ be a ranking with ties, where $\ties_{k_i}$ is a set of items with an identical ranking score,  $\ties_{k_i}\cap\ties_{k_j}=\emptyset$ if $i\neq j$,  and for $i<j$, every $x\in \ties_{k_i}$ is more preferred than all $y\in \ties_{k_j}$.   If we replace Equation \eqref{eq:eta} with 
\begin{equation}\label{eq:eta:extension}
\eta_k(\sigma)=
\begin{cases}
j, & \sigma \in \ties_{k_j};\\
0, & \text{otherwise}.
\end{cases}
\end{equation}
then  Theorem \ref{theorem:framework} can be used, without any modification,  to measure consensus of rankings with ties. 

\subsubsection{Weighting scheme for top-$k$ items}
The measure $\kappa_p(\rankings)$ is a subsequence-based consensus measure and has the ability to handle top-$k$ rankings. In the case where the top-$k$ items rankings need to be weighted with greater value and the items after top-$k$  are not important  (see, e.g.~\cite{Yilmaz:2008:rank:correlation,Webber:2010,Fagin:2003}),  a slight revision to Equation \eqref{equation:theta} will  accommodate this. Consider a given cut-off value $\zeta$ for items position, we can weigh the items  with 
\[
\kappa_1(\rankings)= \sum_{\sigma\in \Sigma } H\!\left(\zeta-\mu(\sigma)-d\right) p^{\mu(\sigma)} \gamma^d 
\]
where $0<p< 1$. Clearly,  if an items is  ranked after position $\zeta$ in one of the rankings,   $ H\!\left(\zeta-\mu(\sigma)-d\right) p^{\mu(\sigma)} \gamma^d =0$.
 
}
%We may be interested in some particular patterns or subsequences with given lengths and we may assume that the longer subsequences would play more important role in consensus measure. Therefore, when calculating the overall ${\kappa}(\rankings)$, instead of summing up ${\kappa}_1(\rankings),\dots,{\kappa}_\ell(\rankings)$,  we   want to upweight the longer subsequences and downweight the shorter ones by taking a weighted sum with weights $\alpha_1,\dots,\alpha_\ell$ (where $\sum_{p=1}^{\ell} \alpha_p=1$) as 
%\[
%{\kappa}(\rankings)=\sum_{p=1}^{\ell} \alpha_p{\kappa}_p(\rankings) 
%\] 

%\subsection{Extension for consensus measure of rankings with ties}
%\textcolor{red}{We may not have space for this.}

\section{Experiments} \label{section:experiments}
This section will present how the proposed $\kappa_p(\cdot)$ and $\kappa(\cdot)$ can be applied to evaluate rank  aggregation and  compare search engine rankings. The source codes and the experimental data used are available on the github repository \footnote{\url{https://github.com/zhiweiuu/secs}}. 
\subsection{Evaluation of rank aggregation of full rankings}

\begin{table*}[h]\caption{Rankings of clustering algorithms with respect to the validation measures.}\label{table:rankings:clustering}
	\begin{center}
		{\small		\begin{tabular}{l|cccccccccc}
				\toprule
				\textbf{Validation}     & \multicolumn{10}{c}{\textbf{$\rankings_c$ -- The set of rankings for clustering algorithms}}  \\
				\textbf{measures} 	 & 1 &2&3 & 4& 5& 6& 7&8 &9 &10\\
				\toprule
				APN&          SM&FN&ST&KM&PM&HR&AG&CL&DI&MO\\
				AD&           SM&FN&KM&PM&CL&ST&DI&HR&AG&MO\\
				ADM&          FN&SM&ST&KM&CL&PM&DI&HR&AG&MO\\
				FOM&          SM&CL&KM&PM&FN&ST&DI&HR&AG&MO\\
				Connectivity& HR&AG&DI&KM&MO&SM&FN&CL&PM&ST\\
				Dunn&         HR&AG&KM&PM&DI&SM&CL&MO&FN&ST\\
				Silhouette&   HR&AG&KM&SM&CL&PM&ST&DI&FN&MO\\
				\bottomrule
			\end{tabular}  
	}	\end{center}	
	
\end{table*}
\begin{table*}[h]\caption{Rank aggregation of rankings (in Table \ref{table:rankings:clustering}) using GA and CE algorithms.}\label{table:rankings:aggregation}
	\begin{center}
		{\small		
			\begin{tabular}{l|cccccccccc}
				\toprule
				Algorithm & \multicolumn{8}{c}{ Aggregation results }\\\hline
				GA & SM & HR & KM & FN & AG & PM & CL & DI & ST & MO\\
				CE & KM & SM & PM & FN & HR & AG & CL & DI & ST & MO\\
				\bottomrule
			\end{tabular}
	}	\end{center}	
	
\end{table*}
\begin{table*}[t]
	\caption{The values of $\kappa(\rankings)$ for the rankings of clustering algorithms  when $\gamma$ and $\lambda$ vary from 1 to 0.45.}\label{table:clustering:kappa}
	{
		\begin{center}
			
			\begin{tabular}{cc|cccccccccccc}
				\toprule			 
				\multicolumn{2}{c|}{\textbf{CE}}& \multicolumn{12}{c}{$\lambda$} \\
				\multicolumn{2}{c|}{\textbf{$\kappa(\rankings_{CE})$}}&1&0.95&0.9&0.85&0.8&0.75&0.7&0.65&0.6&0.55&0.5&0.45\\
				\hline
				\multirow{12}{0.2cm}{$\gamma$}	
				&1  &19&17.589&16.376&15.336&14.448&13.692&13.05&12.508&12.05&11.666&11.344&11.076\\
				&0.95&17.945&16.534&15.321&14.282&13.394&12.637&11.996&11.453&10.996&10.611&10.29&10.021\\
				&0.9&16.964&15.553&14.34&13.301&12.413&11.657&11.015&10.472&10.015&9.631&9.309&9.04\\
				&0.85&16.055&14.644&13.431&12.391&11.503&10.747&10.105&9.563&9.105&8.721&8.399&8.13\\
				&0.8&15.213&13.803&12.589&11.55&10.662&9.906&9.264&8.721&8.264&7.88&7.558&7.289\\
				&0.75&14.438&13.027&11.814&10.775&9.886&9.13&8.489&7.946&7.489&7.104&6.782&6.514\\
				&0.7&13.726&12.315&11.102&10.063&9.174&8.418&7.777&7.234&6.777&6.392&6.07&5.802\\
				&0.65&13.075&11.664&10.451&9.411&8.523&7.767&7.125&6.583&6.125&5.741&5.419&5.151\\
				&0.6&12.482&11.071&9.858&8.818&7.93&7.174&6.532&5.99&5.532&5.148&4.826&4.557\\
				&0.55&11.944&10.533&9.32&8.281&7.392&6.636&5.995&5.452&4.995&4.61&4.288&4.02\\
				&0.5&11.46&10.049&8.836&7.796&6.908&6.152&5.51&4.967&4.51&4.126&3.804&3.535\\
				&0.45&11.026&9.615&8.402&7.362&6.474&5.718&5.076&4.533&4.076&3.692&3.37&3.101\\
				\hline		
				\\
				\toprule			 
				\multicolumn{2}{c|}{\textbf{GA}}& \multicolumn{12}{c}{$\lambda$} \\
				\multicolumn{2}{c|}{\textbf{$\kappa(\rankings_{GA})$}}&1&0.95&0.9&0.85&0.8&0.75&0.7&0.65&0.6&0.55&0.5&0.45\\
				\hline
				\multirow{12}{0.2cm}{$\gamma$}
				&1&19&17.534&16.28&15.211&14.303&13.536&12.889&12.346&11.892&11.514&11.201&10.942\\
				&0.95&17.966&16.5&15.246&14.177&13.27&12.502&11.855&11.312&10.859&10.481&10.167&9.908\\
				&0.9&17.007&15.541&14.287&13.218&12.31&11.543&10.896&10.353&9.899&9.521&9.208&8.949\\
				&0.85&16.119&14.653&13.399&12.33&11.422&10.655&10.008&9.465&9.011&8.634&8.32&8.061\\
				&0.8&15.3&13.834&12.58&11.511&10.603&9.835&9.189&8.646&8.192&7.814&7.501&7.242\\
				&0.75&14.546&13.08&11.826&10.757&9.85&9.082&8.435&7.892&7.439&7.061&6.747&6.488\\
				&0.7&13.856&12.39&11.136&10.066&9.159&8.391&7.744&7.202&6.748&6.37&6.057&5.797\\
				&0.65&13.225&11.759&10.505&9.436&8.528&7.76&7.114&6.571&6.117&5.739&5.426&5.167\\
				&0.6&12.651&11.186&9.931&8.862&7.955&7.187&6.54&5.997&5.544&5.166&4.852&4.593\\
				&0.55&12.132&10.666&9.412&8.343&7.435&6.668&6.021&5.478&5.025&4.647&4.333&4.074\\
				&0.5&11.664&10.199&8.944&7.875&6.968&6.2&5.553&5.011&4.557&4.179&3.865&3.606\\
				&0.45&11.245&9.779&8.525&7.456&6.549&5.781&5.134&4.591&4.138&3.76&3.446&3.187\\
				\hline		
			\end{tabular} 	
	\end{center}
	}
\end{table*}

Rank aggregation, with wide applications in decision making systems, machine learning and social science,  is the problem of how to combine many rankings in order to obtain one consensus ranking \cite{Brancotte:rank:agg:ties,Ivan:group:decision:2012}. Dwork et al.\ has proved that, even if $|\rankings|=4$,  obtaining an optimal aggregation with the Kendall $\tau$ index is NP-hard~\cite{Dwork:2001:rank:aggregation}.  Another problem with rank aggregation is the lack of ground truth for evaluation. Here, we show how we can use the proposed $\kappa_p(\cdot)$ to evaluate the ranking aggregation results.  

The rankings used in this experiment are the seven rankings (shown in Table \ref{table:rankings:clustering})  of 10 clustering algorithms with respect to 7 different validation measures when they are used to cluster microarray data into five clusters \cite{Pihur2009}. The details for the 7 validation measures of  APN, AD, ADM, FOM, connectivity, Dunn and Silhouette and the details for  10 clustering algorithms of  SM, FN, KM, PM, HR, AG, CL, DI and  MO can be found in \cite{Pihur2009}.  Here, we use $\rankings_c$ to denote the set of the 7 rankings. Two rank aggregation algorithms (the Cross-Entropy Monte Carlo algorithm (CE)   and the Genetic algorithm (GA)  ) have been used to aggregate the 7 rankings in  \cite{Pihur2009} and the aggregated rankings are shown in Table \ref{table:rankings:aggregation}.   We create two sets of rankings $\rankings_{GA}$ and $\rankings_{CE}$ by adding each aggregated ranking into $\rankings_c$,
\begin{gather*}
\rankings_{GA}=\rankings_c\cup \{GA\},\\
\rankings_{CE}=\rankings_c\cup \{CE\}.
\end{gather*}

Based on the property in Equation \eqref{eq:kappa:property3}, clearly $\kappa(\rankings_{GA})\leq \kappa(\rankings_c)$ and $\kappa(\rankings_{CE})\leq \kappa(\rankings_c)$. Therefore, we would reasonably expect that a better aggregated ranking would result less decline of $\kappa(\cdot)$ when adding the aggregated ranking into $\rankings_c$. Table \ref{table:clustering:kappa} shows experimental results for  $\kappa(\rankings_{GA})$ and $\kappa(\rankings_{CE})$ by varying values of $\gamma$ and $\lambda$ from 1 to 0.45. Figure \ref{fig:clustering:exp} shows the changes of $\kappa_p(\cdot)$ for both GA and CE. We find that the aggregated ranking by the CE algorithm is more sensitive to $\gamma$ while the aggregated ranking by the GA algorithm is more sensitive to $\lambda$. 
\begin{figure}[t] 
	\centering
	\subfloat[$\kappa_1(\rankings)$ with respect to $\gamma$]{
		\includegraphics[width=0.9\linewidth]{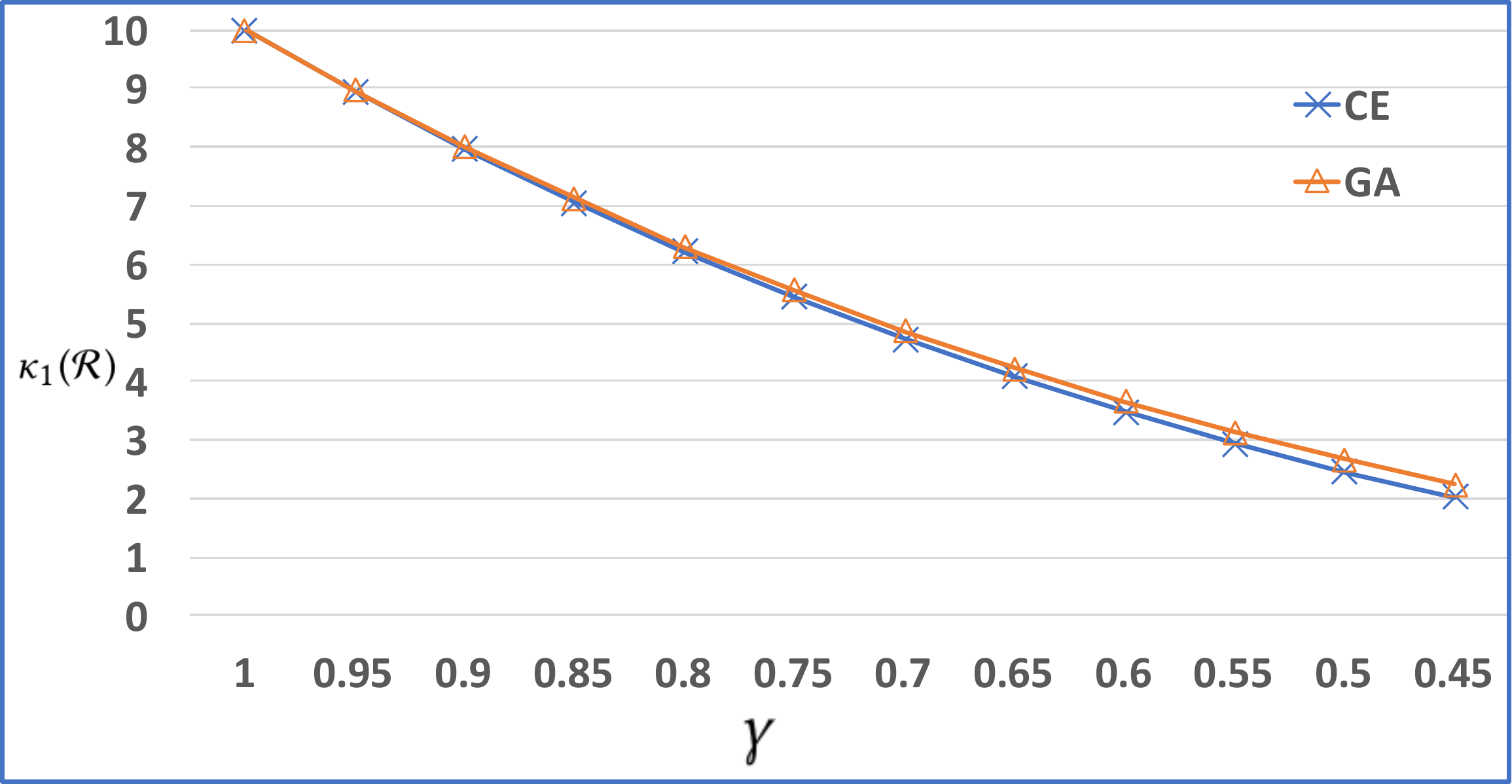}
	}	
	
	\subfloat[$\kappa_2(\rankings)$ with respect to $\gamma$]{
		\includegraphics[width=0.9\linewidth]{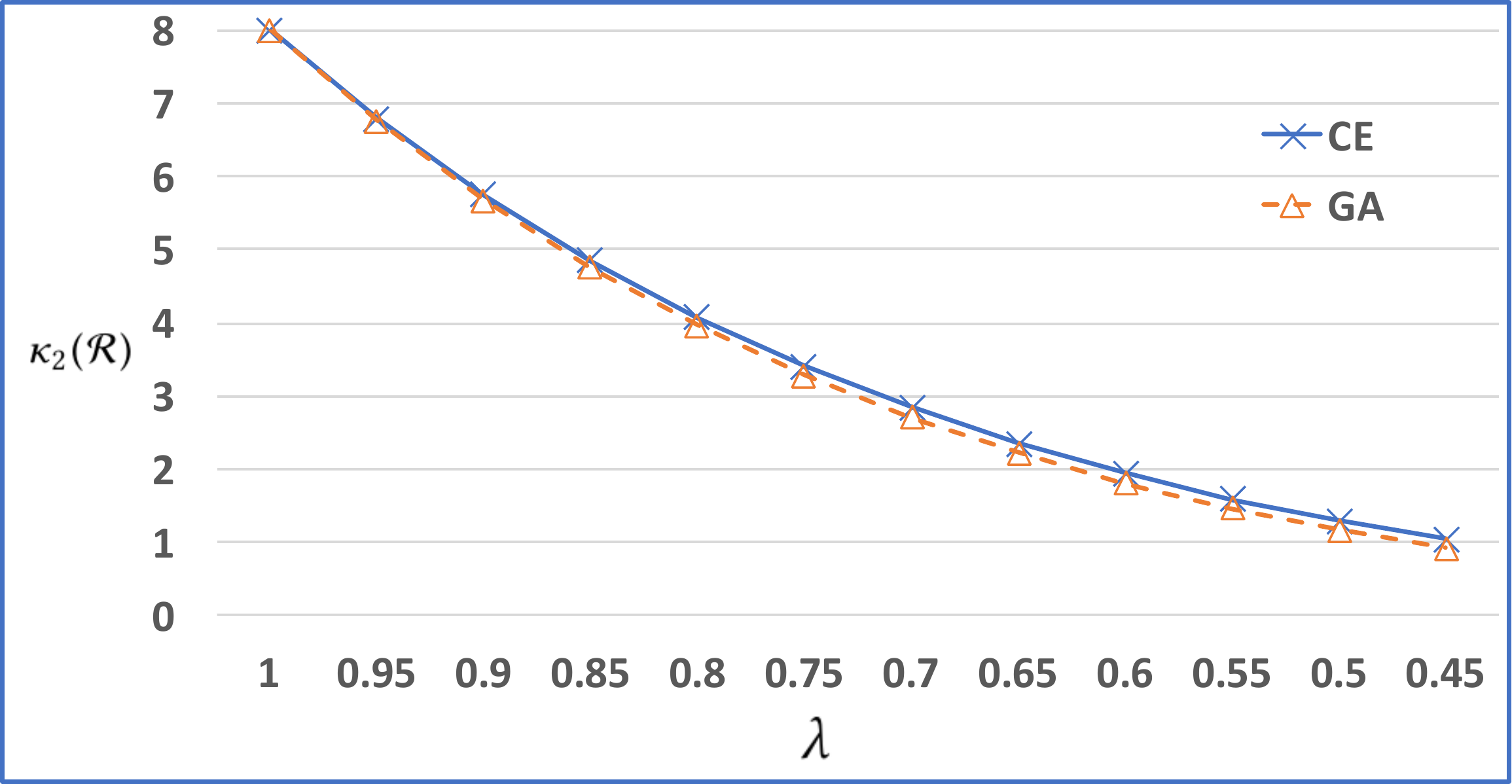}			
	}	
	
	\subfloat[$\kappa_3(\rankings)$   with respect to $\lambda$]{
		\includegraphics[width=0.9\linewidth]{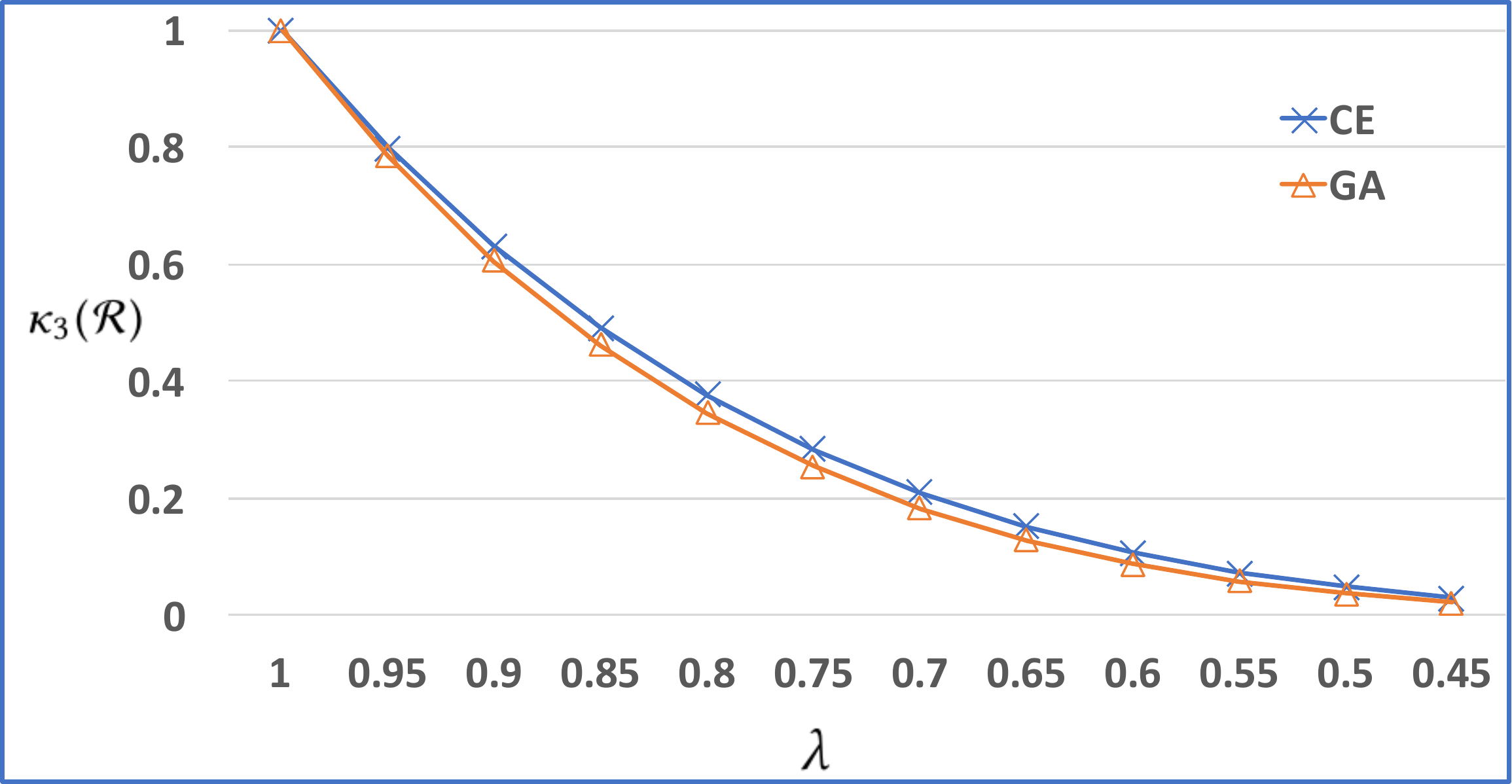}
	}
	
	\caption{The changes of $\kappa_1(\rankings)$, $\kappa_2(\rankings)$, $\kappa_3(\rankings)$  with respect to $\gamma$ and $\lambda$ for evaluating rank aggregation.}
	\label{fig:clustering:exp}
\end{figure}  

\subsection{Evaluation of consensus for top-$k$ rankings}

 \begin{table*} \caption{\small Rankings of top 25 links returned from Google and Bing with the given key words ($BF$ for ``bond films'', $BM$ for ``bond movies'', $0M$ for ``007 movies'', $0F$ for ``007 films'', $JF$ for ``james bond films'', and $JM$ for ``james bond movies''). $\google$ and $\bing$ stand for Google and Bing respectively. The numbers are the ids for the links returned from the search engines, and the mapping of the ids to the links  can be find on our github repository.} \label{table:search:rankings}
	{
	\begin{center}
		
		\begin{tabular}{ll}
			\hline
			$\google$ &  Top 25 links from Google   \\\hline
			${BF}$&\framebox[1\width]{0},68,\framebox[1\width]{9,59,11,5,3},69,79,70,21,4,36,32,76,40,\framebox[1\width]{60},51,80,81,42,29,82,83,73\\
			${BM}$&5,0,9,11,59,76,3,36,21,79,90,70,4,60,93,35,50,40,42,92,86,87,73,98,94\\
			${0M}$&0,9,11,5,3,59,70,84,85,21,76,4,32,42,51,62,12,80,67,60,55,86,87,73,29\\
			${0F}$&0,9,3,11,5,2,59,4,32,21,60,35,42,61,62,12,51,17,63,64,55,65,66,40,67\\
			${JF}$&5,0,68,9,3,11,59,69,70,71,4,21,60,32,36,61,65,72,73,58,74,75,76,77,78\\
			${JM}$&0,9,59,5,11,3,88,60,89,69,90,70,32,91,75,92,93,94,61,40,86,87,95,96,97\\ \hline\hline
			$\bing$ & Top 25 links from Bing \\\hline
			${BF}$&\framebox[1\width]{0,9,1,11},36,\framebox[1\width]{8,4,2},22,5,37,15,38,6,28,34,29,\framebox[1\width]{19},39,40,35,24,41,32,42\\
			${BM}$&0,1,9,2,11,5,50,8,4,56,57,22,3,40,51,7,41,15,19,37,36,55,42,58,38\\
			${0M}$&0,7,9,2,10,8,4,6,5,1,11,14,3,40,13,43,19,44,45,46,47,48,49,17,28\\
			${0F}$&0,1,2,3,4,5,6,7,8,9,10,11,12,13,14,15,16,17,18,19,20,21,22,23,24\\
			${JF}$&9,1,25,4,7,2,0,26,5,11,15,8,27,10,28,29,30,31,6,19,32,22,33,34,35\\
			${JM}$&0,9,3,1,2,11,50,4,8,7,6,38,40,25,10,51,15,19,52,36,53,14,54,55,28\\		\hline
		\end{tabular}
	\end{center}	 	

	} 
\end{table*}

\begin{table*}[t]
	\caption{The values of $\kappa(\rankings)$ for the search rankings from Google and Bing  when $\gamma$ and $\lambda$ vary from 1 to 0.45.}\label{table:search:kappa}
	{
	 \begin{center}
	 	
		\begin{tabular}{cc|cccccccccccc}
			\toprule			 
			\multicolumn{2}{c|}{\textbf{Google}}& \multicolumn{12}{c}{$\lambda$} \\
			\multicolumn{2}{c|}{\textbf{$\kappa(\rankings)$}}&1&0.95&0.9&0.85&0.8&0.75&0.7&0.65&0.6&0.55&0.5&0.45\\
			\hline
			\multirow{12}{0.2cm}{$\gamma$}&	1&33&24.502&19.136&15.698&13.438&11.896&10.798&9.982&9.351&8.85&8.446&8.117\\
			&0.95&32.475&23.977&18.611&15.173&12.913&11.371&10.274&9.457&8.826&8.325&7.921&7.592\\
			&0.9&31.982&23.483&18.118&14.68&12.42&10.878&9.78&8.964&8.333&7.832&7.428&7.099\\
			&0.85&31.518&23.02&17.654&14.216&11.956&10.414&9.316&8.5&7.869&7.368&6.964&6.635\\
			&0.8&31.081&22.583&17.217&13.779&11.519&9.977&8.879&8.063&7.432&6.931&6.527&6.198\\
			&0.75&30.669&22.17&16.805&13.367&11.107&9.565&8.467&7.65&7.02&6.519&6.114&5.785\\
			&0.7&30.279&21.781&16.415&12.977&10.717&9.175&8.078&7.261&6.63&6.129&5.725&5.396\\
			&0.65&29.91&21.412&16.046&12.608&10.348&8.806&7.709&6.892&6.261&5.76&5.356&5.027\\
			&0.6&29.56&21.061&15.696&12.258&9.998&8.456&7.358&6.542&5.911&5.41&5.005&4.676\\
			&0.55&29.226&20.728&15.362&11.925&9.664&8.123&7.025&6.208&5.577&5.076&4.672&4.343\\
			&0.5&28.908&20.41&15.044&11.607&9.346&7.805&6.707&5.89&5.259&4.758&4.354&4.025\\
			&0.45&28.604&20.105&14.74&11.302&9.042&7.5&6.402&5.586&4.955&4.454&4.049&3.721	\\
			\hline\\		
			\toprule			 
			\multicolumn{2}{c|}{\textbf{Bing}}& \multicolumn{12}{c}{$\lambda$} \\
			\multicolumn{2}{c|}{\textbf{$\kappa(\rankings)$}}&1&0.95&0.9&0.85&0.8&0.75&0.7&0.65&0.6&0.55&0.5&0.45\\
			\hline
			\multirow{12}{0.2cm}{$\gamma$}&	1&23&16.392&12.835&10.887&9.788&9.143&8.745&8.49&8.32&8.207&8.13&8.079\\
			&0.95&22.147&15.539&11.982&10.034&8.935&8.29&7.892&7.637&7.467&7.354&7.277&7.226\\
			&0.9&21.355&14.747&11.189&9.242&8.143&7.497&7.1&6.844&6.675&6.561&6.484&6.433\\
			&0.85&20.62&14.012&10.455&8.507&7.409&6.763&6.365&6.11&5.941&5.827&5.75&5.699\\
			&0.8&19.942&13.334&9.777&7.829&6.731&6.085&5.687&5.432&5.263&5.149&5.072&5.021\\
			&0.75&19.318&12.71&9.152&7.205&6.106&5.46&5.063&4.807&4.638&4.524&4.448&4.396\\
			&0.7&18.745&12.137&8.58&6.632&5.534&4.888&4.49&4.235&4.066&3.952&3.875&3.824\\
			&0.65&18.222&11.614&8.057&6.109&5.01&4.365&3.967&3.712&3.542&3.429&3.352&3.301\\
			&0.6&17.746&11.138&7.581&5.633&4.535&3.889&3.491&3.236&3.067&2.953&2.876&2.825\\
			&0.55&17.316&10.708&7.15&5.203&4.104&3.458&3.061&2.805&2.636&2.522&2.446&2.394\\
			&0.5&16.928&10.321&6.763&4.816&3.717&3.071&2.673&2.418&2.249&2.135&2.058&2.007\\
			&0.45&16.582&9.974&6.417&4.469&3.371&2.725&2.327&2.072&1.903&1.789&1.712&1.661\\
			\hline		
		\end{tabular} 
	 	
	 \end{center}		
	
	}
\end{table*}

In this section, we show how the proposed consensus measure can be used to evaluate top-$k$ rankings \cite{Fagin:2003}. Here, we are interested in the top-25 items from Google and Bing searches.  Twelve top-25 rankings  used in this experiment are  the search rankings from Google and Bing by using 6 ``related'' key words: ``Bond films'', ``Bond Movies'', ``007 films'', ``007 movies'', ``James Bond films'' and ``James Bond movies'', with 6 rankings from Google and 6 rankings from Bing. Table \ref{table:search:rankings} shows the twelve rankings.  As these key words refer to an identical concept from human perspective,  we want to know how close or related these rankings are, which can be evaluated by the proposed consensus measure $\kappa(\rankings)$ and $\kappa_p(\rankings)$, in terms of ``relatedness'' or ``closeness''.   More details about the  extracted rankings can be found on the github repository. 

The $\kappa(\rankings)$ values for Bing and Google rankings are shown in  Table  \ref{table:search:kappa}.  The results show that for the given key words, Google has consistently higher $\kappa(\rankings)$ values than Bing when both $\lambda$ and  $\gamma$ varies from 1 to 0.45. However, it is difficult to discern which search engine results are more sensitive to $\gamma$ and $\lambda$. Therefore, we show the changes of $\kappa_p(\rankings)$ in terms of $\gamma$ and $\lambda$ in Figure \ref{fig:search:kappa:p}. In Figure \ref{fig:search:kappa:1}, though Bing has 8 links in common ($\kappa_1(\rankings)=8$) and Google has only 7 links in common ($\kappa_1(\rankings)=7$), Bing's search results are more sensitive to the $\gamma$, which shows that the deviation of the links positions in Bing's search results are bigger than that  by Google search. Especially, when $\gamma\leq 0.85$, the $\kappa_1(\rankings)$ for Google's rankings is in fact higher than that from Bing's rankings. Also, as shown in Figure \ref{fig:search:kappa:p}, both $\kappa_2(\rankings)$ and $\kappa_3(\rankings)$ suggest that Google's rankings have a higher degree of consensus than Bing's rankings.

\begin{figure}[t]
	\begin{center}
		
		\subfloat[$\kappa_1(\rankings)$ with respect to $\gamma$]{\label{fig:search:kappa:1}
			\includegraphics[width=0.9\linewidth]{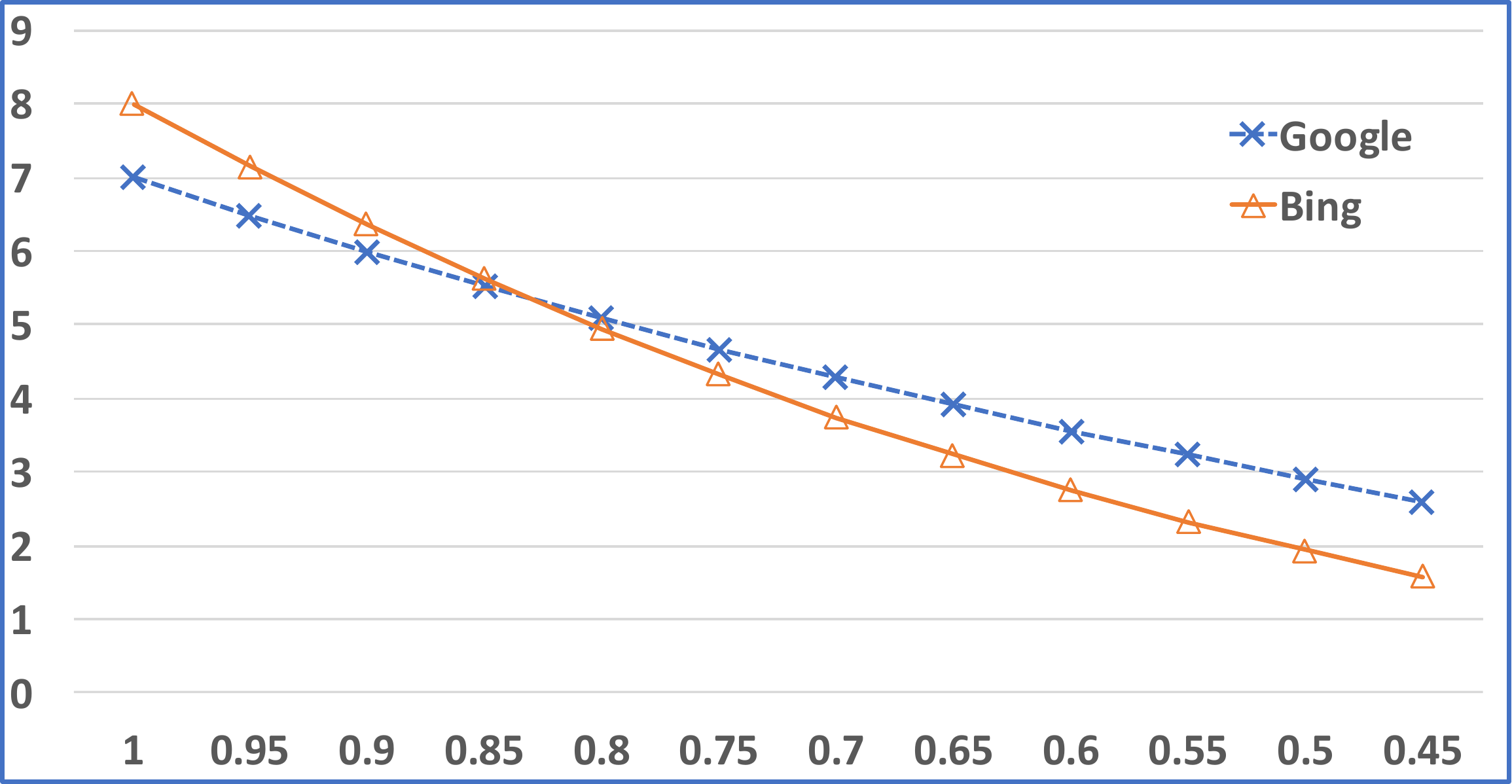}
			%			\begin{tabular}{r|cccccccccccccc}
			%				\toprule
			%				& \multicolumn{12}{c}{$\gamma$}\\
			%				& $ 1$& $0.95$& $ 0.9$& $0.85$& $0.8$& $0.75$ & $ 0.7$& 0.65 & 0.6 & 0.55 & 0.5 & 0.45 \\\hline
			%				Google $\kappa_1(\rankings)$ & 7 &6.475 &5.982 & 5.518 & 5.081 & 4.669 & 4.279 &3.91  & 3.56  &3.226 & 2.908 & 2.604\\
			%				Bing	$\kappa_1(\rankings)$ & 8 &7.147 & 6.355 & 5.62 &4.942  & 4.318 & 3.745 &3.222 & 2.746 &2.316 &  1.928 &  1.582 \\
			%				\hline
			%			\end{tabular}
			%			
		}
		
		\subfloat[$\kappa_2(\rankings)$ with respect to $\lambda$]{
			\includegraphics[width=0.9\linewidth]{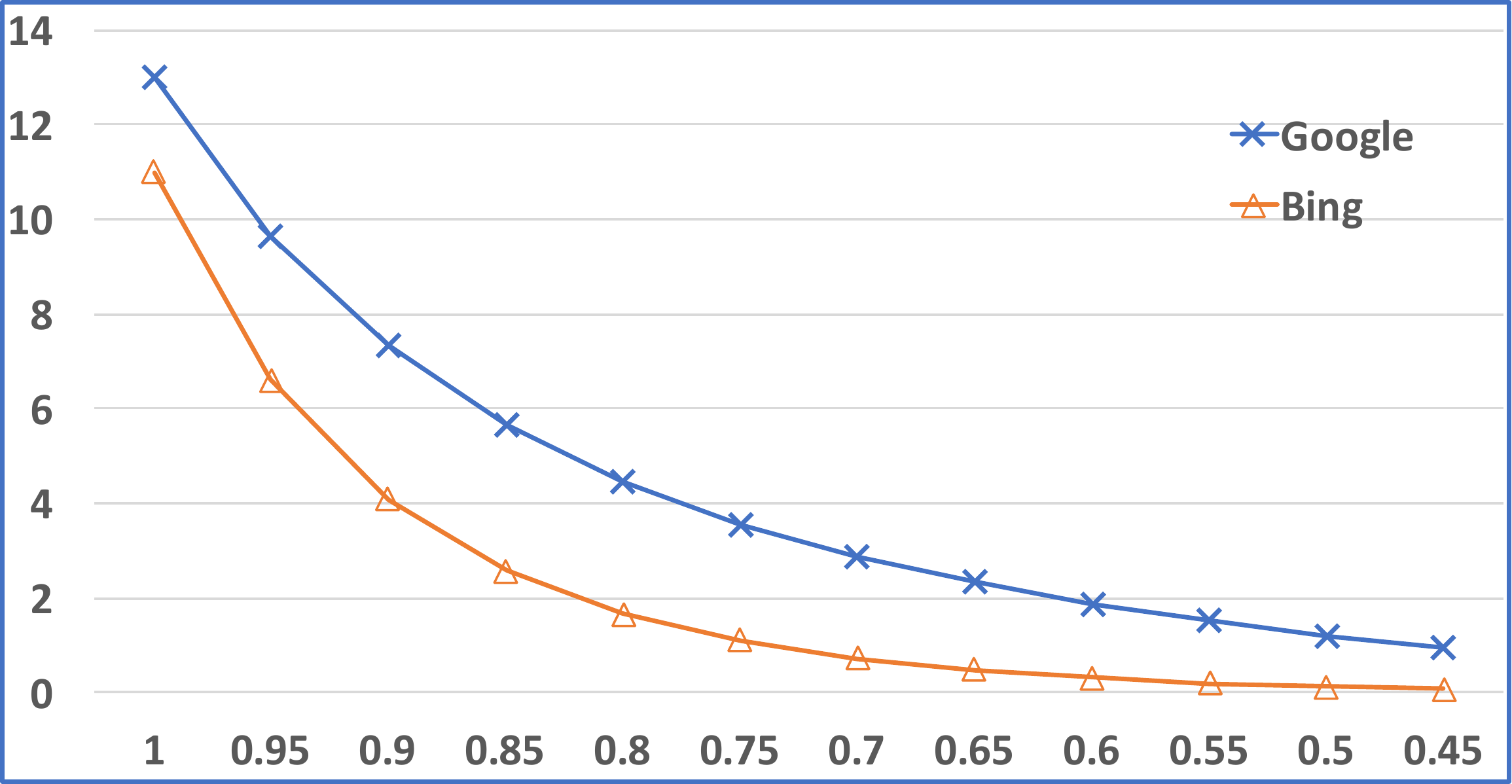}			
			%			\begin{tabular}{r|cccccccccccccc}
			%				\toprule
			%				& \multicolumn{12}{c}{$\lambda$}\\
			%				& $ 1$& $0.95$& $ 0.9$& $0.85$& $0.8$& $0.75$ & $ 0.7$& 0.65 & 0.6 & 0.55 & 0.5 & 0.45 \\\hline
			%				Google $\kappa_2(\rankings)$ & 13&9.651 &7.318 & 5.664 & 4.462& 3.563& 2.872& 2.325& 1.882& 1.517& 1.214& 0.959 \\
			%				Bing $\kappa_2(\rankings)$ &  11 &	6.615&4.077& 2.579& 1.669&1.099&0.73 & 0.485& 0.319& 0.206&0.13& 0.079  \\
			%				\hline 
			%			\end{tabular}			
		}
		
		\subfloat[$\kappa_3(\rankings)$   with respect to $\lambda$]{
			\includegraphics[width=0.9\linewidth]{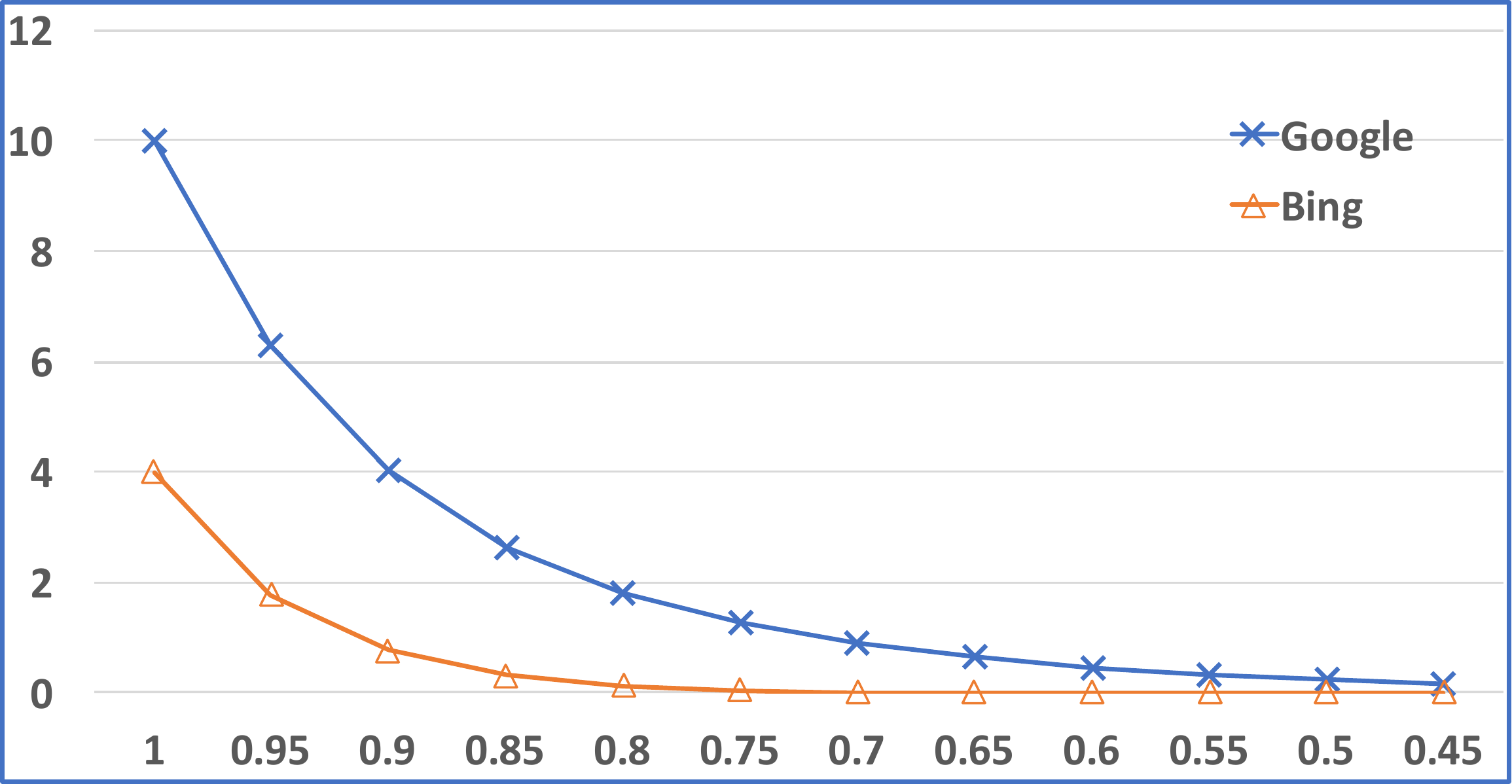}
			%			
			%			\begin{tabular}{r|cccccccccccccc}
			%				\toprule
			%				& \multicolumn{12}{c}{$\lambda$}\\
			%				& $ 1$& $0.95$& $ 0.9$& $0.85$& $0.8$& $0.75$ & $ 0.7$& 0.65 & 0.6 & 0.55 & 0.5 & 0.45 \\\hline
			%				Google  $\kappa_3(\rankings)$ & 10 & 6.271& 4.014& 2.641& 1.792&	1.251& 0.892&0.643& 0.464&0.331&0.232&0.157 \\
			%				Bing $\kappa_3(\rankings)$ &4& 1.777& 0.757& 0.308& 0.119& 0.043& 0.015& 0.005&0.001& 3.504E-04&8.067E-05&	1.600E-05     \\
			%				\hline 	
			%			\end{tabular}
		}
	\end{center}	
	
	%	\subfloat[  $\kappa_4(\rankings)$  with respect to $\lambda$]{
	%		\begin{minipage}{1\textwidth}\centering
	%			\begin{tabular}{r|cccccccccccccc}
	%				\toprule
	%				& \multicolumn{12}{c}{$\lambda$}\\
	%				& $ 1$& $0.95$& $ 0.9$& $0.85$& $0.8$& $0.75$ & $ 0.7$& 0.65 & 0.6 & 0.55 & 0.5 & 0.45 \\\hline
	%				Google  $\kappa_4(\rankings)$ & 3& 1.58 &0.804&0.393&0.184&0.082&0.035&0.014&0.005&0.002&0.001&1.388E-04 \\
	%				Bing $\kappa_4(\rankings)$ & \multicolumn{12}{c}{N/A}     \\
	%				\hline
	%			\end{tabular}
	%			
	%		\end{minipage}
	%	}
	\caption{The changes of $\kappa_1(\rankings)$, $\kappa_2(\rankings)$, $\kappa_3(\rankings)$  with respect to $\gamma$ and $\lambda$ for evaluating search rankings from Google and Bing. }
	\label{fig:search:kappa:p}
\end{figure}

\section{Conclusion and future work}\label{section:conclusion}

This paper introduces a novel approach for consensus measure of rankings by using graph representation, in which the vertices are the items and the edges are the relationship of items in the rankings.  Such representation leads to various algorithms for consensus measure in terms of different aspects in the rankings, including the number of common patterns, the number of common patterns with fixed length and the length of the longest common patterns. We present  how the proposed approaches can be used to evaluate  rank aggregation and compare search engines rankings. 

In future, we will look into the property, shown in Equation \eqref{eq:kappa:property3} and use it to define a new objective function for rank aggregation so that the proposed approaches can be used  to develop elastic algorithms for rank aggregation. A challenging task for future is to extract the common patterns of rankings and use these common pattern to define a probabilistic model for evaluating rankings generated from different systems.

\bibliography{bib}
\end{document}